\definecolor{c1}{cmyk}{0,0.6175,0.8848,0.1490} 
\definecolor{c2}{cmyk}{0.1127,0.6690,0,0.4431} 
\definecolor{c3}{cmyk}{0.3081,0,0.7209,0.3255} 
\definecolor{c4}{cmyk}{0.6765,0.2017,0,0.0667} 
\definecolor{c5}{cmyk}{0,0.8765,0.7099,0.3647} 
\definecolor{forestgreen}{HTML}{397727}
\newtcbox{\hlprimarytab}{on line, rounded corners, box align=base, colback=c3!10,colframe=white,size=fbox,arc=3pt, before upper=\strut, top=-2pt, bottom=-4pt, left=-2pt, right=-2pt, boxrule=0pt}
\newtcbox{\hlsecondarytab}{on line, box align=base, colback=red!10,colframe=white,size=fbox,arc=3pt, before upper=\strut, top=-2pt, bottom=-4pt, left=-2pt, right=-2pt, boxrule=0pt}
\newtcbox{\hlorangetab}{on line, box align=base, colback=orange!10,colframe=white,size=fbox,arc=3pt, before upper=\strut, top=-2pt, bottom=-4pt, left=-2pt, right=-2pt, boxrule=0pt}
\newtcbox{\hlgraytab}{on line, rounded corners, box align=base,colframe=white,size=fbox,arc=3pt, before upper=\strut, top=-2pt, bottom=-4pt, left=-2pt, right=-2pt, boxrule=0pt}
\newcolumntype{Y}{>{\centering\arraybackslash}X}
\newtheorem{theorem}{Theorem}
\newtheorem{cor}{Corollary} 
\definecolor{Gray}{gray}{0.8}
\definecolor{LightGray}{gray}{0.95}
\def\E{\mathbb{E}}
\renewcommand{\P}{\mathbb{P}}
\newcommand{\Lub}[1]{\hat L^{#1}}
\DeclareMathOperator*{\argmin}{arg\,min}
\def\blfootnote{\xdef\@thefnmark{}\@footnotetext}
\definecolor{red2}{rgb}{0.7, 0, 0.1}
\title{Probably Approximately Correct Labels}
\author{Emmanuel J. Cand\`es\thanks{Authors ordered alphabetically.}$^{*,\dagger, \vartriangle}$ \qquad Andrew Ilyas$^{*,\dagger}$ \qquad Tijana Zrnic$^{*,\dagger, \diamond}$\\ \\ 
$^\dagger$Department of Statistics\\
$^\vartriangle$Department of Mathematics\\
$^\diamond$Stanford Data Science\\ \\
Stanford University}
\date{}
\begin{document}

\maketitle

\begin{abstract}
Obtaining high-quality labeled datasets is often costly, requiring either human annotation or expensive experiments. In theory, powerful pre-trained AI models provide an opportunity to automatically label datasets and save costs. Unfortunately, these models come with no guarantees on their accuracy, making wholesale replacement of manual labeling impractical. In this work, we propose a method for leveraging pre-trained AI models to curate cost-effective and high-quality datasets. In particular, our approach results in \emph{probably approximately correct labels}: with high probability, the overall labeling error is small. Our method is nonasymptotically valid under minimal assumptions on the dataset or the AI model being studied, and thus enables rigorous yet efficient dataset curation using modern AI models. We demonstrate the benefits of the methodology through text annotation with large language models, image labeling with pre-trained vision models, and protein folding analysis with AlphaFold.
\end{abstract}

\section{Introduction}
A key ingredient in any scientific pipeline
is the availability of large amounts of high-quality {\em labeled} data. 
For example, 
social scientists rely on extensively-labeled
datasets to understand human behavior \citep{salganik2017bit}
and design policy interventions. 
Collecting 
high-quality labels for a given set of inputs
is typically an arduous task that requires 
significant human expertise, 
costly large-scale experimentation,
or expensive simulations.
As such, researchers often outsource label collection
to a third party ``data provider''---this might be an 
annotation platform for labeling images, 
a wet lab for running scientific experiments, 
or a survey platform for collecting responses from a 
target population of individuals.

For data providers,
the high cost of collecting high-quality labels
combined with the rising performance of AI models 
suggests an enticing prospect: using AI {\em predictions} 
in place of manually-collected labels.
Indeed, recent works have demonstrated AI models' ability to predict protein structures \citep{jumper2021highly},
to evaluate language model responses \citep{zheng2023judging}, and even to simulate human experimental 
subjects \citep{argyle2023out}.
These advances highlight the potential for AI to streamline data annotation,
and to produce high-quality labels at a fraction of the cost.

The problem with such an approach is that AI models are not
always accurate, and come with no guarantees on 
how well they will label a given dataset. 
This makes it untenable to use
AI-predicted labels as a direct substitute for expert labels, particularly in
settings where label quality is critical. For instance, if 
the downstream goal is to draw conclusions that inform policy decisions,
we should not blindly treat AI predictions of human behavior as 
if they were experimentally collected data.

Motivated by this state of affairs, in this paper we ask:
\begin{center}
    \textit{Can we leverage powerful AI models to label data, while still guaranteeing quality?}
\end{center}
We answer this question in the affirmative, and provide a method---which we call \emph{probably approximately correct} (PAC) labeling---that 
automatically combines cheap, non-expert labels (whether AI predictions,
crowd-sourced labels, or simple heuristics) with expensive, expert labels
to produce a labeled dataset with small error.
PAC labeling yields guarantees similar in flavor to that of its namesake in
probably approximately correct (PAC) learning \citep{valiant1984theory}: given 
user-specified constants $\epsilon, \alpha > 0$, our procedure results in a labeled dataset
with error at most $\epsilon$, with
probability at least~$1-\alpha$. This guarantee is \emph{nonasymptotic} under minimal assumptions on the dataset or the predicted labels being used.

\subsection{Contributions}

We give a brief overview of our contributions, beginning with the problem setup. Given an unlabeled dataset $X_1,\dots,X_n \in \mathcal X$, with unknown expert labels $Y_1,\dots,Y_n$, our goal is to return a labeled dataset $(X_1,\tilde Y_1),\dots,(X_n,\tilde Y_n)$, such that we incur only a small amount of labeling errors:
\begin{equation}
\label{eq:labeling_guarantee}
\frac{1}{n} \sum_{i=1}^n \ell(Y_i,\tilde Y_i)\leq\epsilon, \text{ with probability } 1-\alpha.
\end{equation}
Here, $\alpha$ and $\epsilon$ are user-chosen error parameters and $\ell$ is a relevant error metric. For example, if we want categorical labels to be accurate, we can choose the 0-1 loss: $\ell(Y_i, \tilde Y_i) = \mathbf{1}\{Y_i \neq \tilde Y_i\}$. The guarantee \eqref{eq:labeling_guarantee} then requires that at most an $\epsilon$-fraction of the dataset is mislabeled, with high probability. In regression problems, one might choose the squared loss, $\ell(Y_i,\tilde Y_i) = (Y_i - \tilde Y_i)^2$. We call $\tilde Y_i$ that satisfy the criterion \eqref{eq:labeling_guarantee} \emph{probably approximately correct} (PAC) labels.
To avoid making strong assumptions, we treat the data as \emph{fixed}; probabilities are taken only over the labeling algorithm.

To produce the label $\tilde Y_i$, we are allowed to query an expert for $Y_i$, which is costly, or instead use a cheap AI prediction $\hat Y_i = f(X_i)$, where $f$ is an AI model. The prediction $\hat Y_i$ can depend on any feature information available for point $i$, as well as any source of randomness internal to $f$. We will consider two settings: a basic setting with a single AI model $f$, and a more complex setting that assumes access to $k$ different models $f_1,\dots,f_k$.

Of course, we can trivially achieve \eqref{eq:labeling_guarantee} by collecting expert labels for all $n$ data points. The goal is to achieve the criterion while minimizing the cost of the labeling. We will consider two ways of measuring the cost. The basic one is to simply count the number of collected expert labels; the AI-predicted labels are assumed to essentially come at no cost. The second way of measuring the cost takes into account the costs $c_1,\dots,c_k$ of querying the $k$ models, as well as the cost of an expert label $c_{\mathrm{expert}}$. When $c_{\mathrm{expert}}$ is much larger than $c_1,\dots,c_k$, the second setting reduces to the first.

Our main contribution is a method for producing PAC labels which, as we will show through a series of examples with different data modalities and AI models, allow for significant saves in labeling cost. The key feature that enables a cost reduction is access to a good measure of model uncertainty about the label, which allows focusing the expert budget on instances where the model is most uncertain. Crucially, the nonasymptotic validity of PAC labeling does \emph{not} depend on the quality of the uncertainty measure; however, more useful measures lead to larger saves in cost. We provide refinements of the method that additionally learn to calibrate the uncertainty scores to make the saves in cost even more pronounced.

\subsection{Related work}

\paragraph{Adaptive dataset labeling and curation.}
Our work most closely relates to the literature on efficient dataset labeling from possibly noisy labels. A distinguishing feature of our work is that we construct \emph{provably accurate} labels with nonasymptotic guarantees, under no assumptions on the noisy labels. In contrast, much of existing work makes strong parametric or distributional assumptions---for example, model errors following a truncated power-law distribution \cite{qiu2020mcal}, the data following a well-specified parametric family \cite{ratner2016data}, or a class-conditional noise process \cite{northcutt2021confident}. Many works lack formal accuracy guarantees \cite{zhu2002learning, iscen2019label, bernhardt2022active, li2023coannotating, xie2020self}. Since we do not place distributional assumptions on the data but instead consider it fixed, our work particularly relates to the labeling problem known as transductive learning \cite{vapnik1998statistical, joachims2003transductive}.
A key feature of our work is that we leverage pre-trained AI models, such as off-the-shelf language or vision models, and make no complexity assumptions on the expert labeling mechanism. An emerging line of work studies human-AI collaborative approaches to dataset curation \cite{li2023coannotating, yuan2021synthbio, liu2022wanli}. Our work is motivated by similar problems, with a focus on ensuring statistical validity.
Importantly, many of the above works use uncertainty to decide which labels to collect~\cite{bernhardt2022active, li2023coannotating}. Our work similarly relies on uncertainty; in fact, our procedure can be applied as a wrapper around \emph{any} uncertainty score to provide a statistically valid labeling. For example, the CoAnnotating paradigm defines an uncertainty score and proposes annotating the top $k$ most uncertain points with human annotations and the rest with AI annotations, for some user-chosen $k$. Our procedure can be applied to select $k$ in a data-driven manner, so that the final labeling is $(1-\epsilon)$-accurate with high probability. More distant but related is a vast line of work studying different strategies for reliable aggregation of multiple noisy labels \cite{karger2014budget, cheng2022many, dawid1979maximum, whitehill2009whose, zhang2015active, yan2010modeling, welinder2010multidimensional, sheng2008get, yan2011active}. Our focus is on pre-trained AI models as multiple noisy labelers with varying qualities and strengths.

\paragraph{Distribution-free uncertainty quantification.}
At a technical level, 
our procedure resembles the construction of risk-controlling prediction sets \cite{bates2021distribution} and performing risk-limiting audits \cite{waudby2021rilacs, shekhar2023risk}. Like the former, our procedure bounds a monotone loss function by tuning a one-dimensional threshold, though not for the purpose of predictive inference. Similarly to the latter, our procedure aims to collect sufficient expert labels so as to meet a pre-specified quality guarantee. Like all these methods, PAC labeling satisfies \emph{nonasymptotic}, \emph{distribution-free} statistical guarantees. To achieve this, we build on betting-based confidence intervals \cite{waudby2020estimating, orabona2023tight}. 
Our proposal relates in spirit to prediction-powered inference~\cite{angelopoulos2023prediction, zrnic2023cross, angelopoulos2023ppipp} and related control-variate approaches \cite{zhouaccelerating, egami2023using}, where the goal is to improve the power of statistical inferences given a small amount of expert-labeled data, a large amount of unlabeled data, and a good predictive model. We do not focus on statistical inference per se; rather, we aim to construct an accurately labeled dataset that can be used for any downstream task.

\paragraph{Active learning and inference.}

The idea behind our method is to collect expert
labels where the AI model is most uncertain; in that sense, our method relates
to active learning \cite{settles2009active, lewis1995sequential,
beluch2018power, zhang2015active} and active inference \cite{zrnic2024active,
gligoric2024can}. Notably, there is a line of work in active learning that
considers costs \cite{settles2008active_cost, donmez2008proactive,
wang2016cost}.  Our goal is fundamentally different: it is neither fitting a predictive model nor
statistical inference, but producing high-quality labeled data with a provable nonasymptotic guarantee under minimal assumptions. In general, this is neither necessary nor sufficient for active learning.

\section{PAC labeling: core method}
\label{sec:pac_labeling}

We begin with the basic setting with one AI model that produces cheap labels. Thus, we have $\hat Y_i = f(X_i)$ for all data points. In addition, we assume access to scalar uncertainty scores $U_1,\dots,U_n$ (typically scaled such that $U_i \in [0, 1]$) corresponding to the predictions $\hat Y_1,\dots,\hat Y_n$. We place no assumptions on the quality of $U_i$, however if lower $U_i$ correspond to more accurate predictions $\hat Y_i$, the procedure will achieve big gains. The PAC guarantee \eqref{eq:labeling_guarantee} holds no matter the quality of $U_i$.

The basic idea behind the procedure is to find an uncertainty threshold $\hat u$ and label all data points with uncertainty that exceeds this threshold, $U_i \geq \hat u$. The more accurate the predictions $\hat Y_i$ are, the higher this threshold will be.
To explain how we set $\hat u$, we introduce some notation. Let $\ell^u(Y_i,\hat Y_i) = \ell(Y_i,\hat Y_i) \mathbf{1}\{U_i \leq u\}$ and $L^u = \frac 1 n \sum_{i=1}^n \ell^u(Y_i,\hat Y_i)$. Ideally, if we knew $L^u$ for every $u$, we would choose the \emph{oracle threshold}:
\[u^* = \min\left\{U_i : L^{U_i} > \epsilon\right\}.\]
In other words, if we label all points with $U_i \geq u^*$, meaning $\tilde Y_i = Y_i \mathbf{1}\{U_i \geq u^*\} + \hat Y_i \mathbf{1}\{U_i < u^*\}$, then we satisfy $\frac 1 n \sum_{i=1}^n \ell(Y_i, \tilde Y_i) \leq \epsilon$ with probability one. The issue is that we do not have access to $Y_i$, and thus we cannot compute $L^{U_i}$. To resolve this issue, we estimate an upper bound on $L^{U_i}$ by initially collecting expert labels for a small subset of the data. We will soon explain such a strategy; for now assume that for every $\alpha\in(0,1)$ and every $u$, we can obtain a valid upper confidence bound on $L^u$ at level $1-\alpha$, denoted $\Lub{u}(\alpha)$:
\[\P(L^u \leq \Lub{u}(\alpha)) \geq 1-\alpha.\]
Note that we only require $\Lub{u}(\alpha)$ to be valid one $u$ at a time, not simultaneously.
Our empirical approximation of the oracle threshold is given by:
\begin{equation}
\label{eq:uhat}
\hat u = \min \{U_i :  \Lub{U_i}(\alpha) > \epsilon\}.
\end{equation}
Therefore, we collect expert labels where our uncertainty is $\hat u$ or higher:
$\tilde Y_i = Y_i \mathbf{1}\{U_i \geq \hat u\} + \hat Y_i \mathbf{1}\{U_i < \hat u\}$. Figure \ref{fig:PAC} illustrates the procedure visually.
We argue that such labels $\tilde Y_i$ are PAC labels.

\begin{figure}[h]
\centering
\includegraphics[width = 0.6\textwidth]{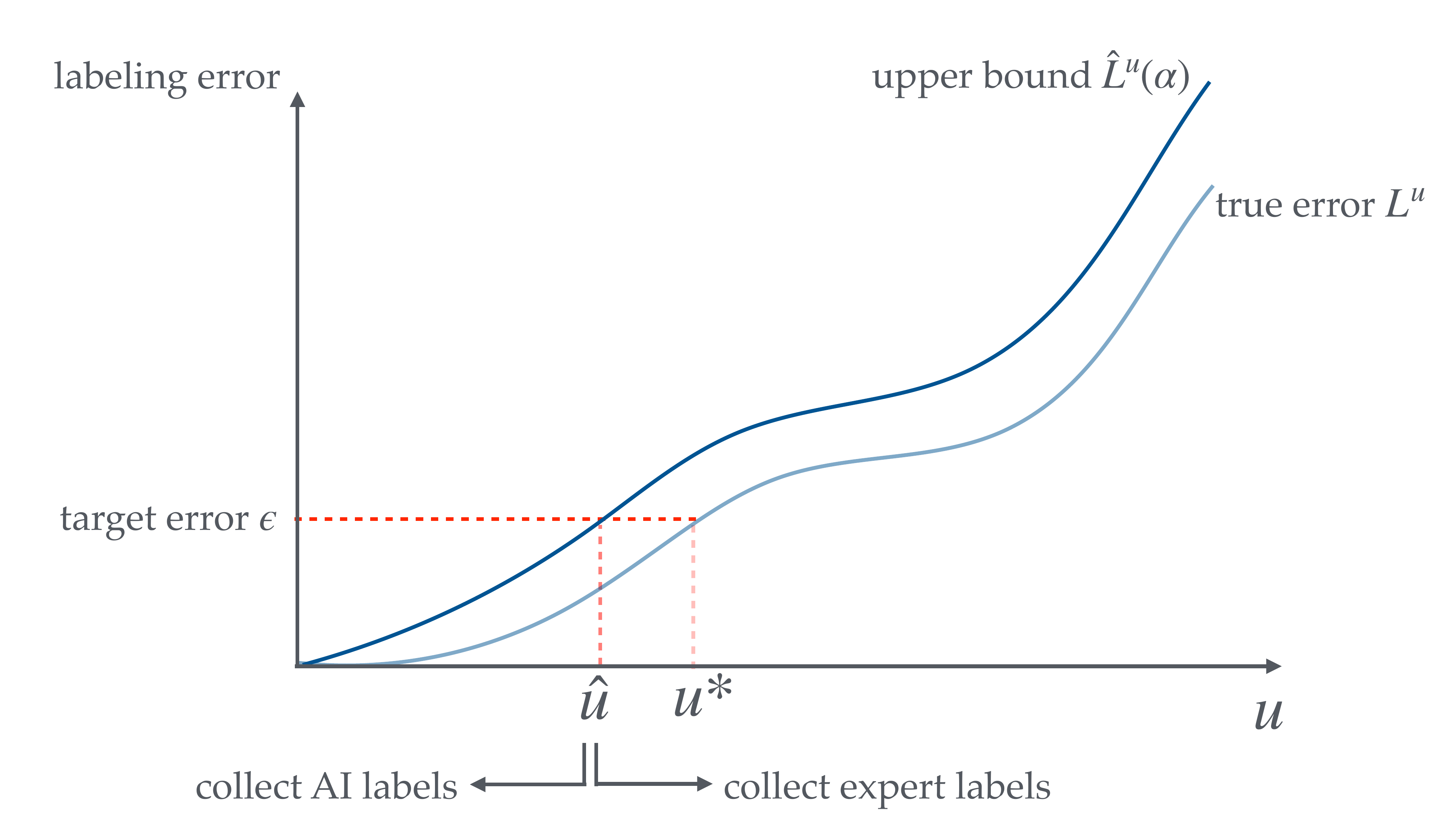}
\caption{\textbf{Illustration of PAC labeling.} The procedure estimates an uncertainty threshold $\hat u$ and collects expert labels for all points where $U_i \geq \hat u$.}
\label{fig:PAC}
\end{figure}

\begin{theorem}
\label{thm:main_theorem}
The labels $\tilde Y_i = Y_i \mathbf{1}\{U_i \geq \hat u\} + \hat Y_i \mathbf{1}\{U_i < \hat u\}$, with $\hat u$ given by~\eqref{eq:uhat}, are PAC labels~\eqref{eq:labeling_guarantee}.
\end{theorem}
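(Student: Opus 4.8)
The plan is to reduce the realized average loss to a single quantity $L^{u^-}$ and then control it via the confidence bound at the deterministic oracle threshold. First I would simplify the loss of the returned labels. Since $\ell(Y_i,Y_i)=0$, every point with $U_i \ge \hat u$ contributes nothing (there we return the expert label), so
\[
\frac 1n \sum_{i=1}^n \ell(Y_i,\tilde Y_i) = \frac 1n \sum_{i : U_i < \hat u} \ell(Y_i,\hat Y_i).
\]
Writing $u^- = \max\{U_i : U_i < \hat u\}$ (and treating the empty case as giving loss $0$), the index set $\{i : U_i < \hat u\}$ coincides with $\{i : U_i \le u^-\}$, so the right-hand side is exactly $L^{u^-}$. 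Thus it suffices to show $L^{u^-} \le \epsilon$ with probability at least $1-\alpha$.

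Next I would connect $\hat u$ to the oracle threshold $u^*=\min\{U_i : L^{U_i}>\epsilon\}$, which is deterministic once we condition on the fixed data and on the internal randomness generating the $U_i,\hat Y_i$. I claim that the event $\{\hat u \le u^*\}$ already implies the desired loss bound. Indeed, on this event $u^- < \hat u \le u^*$, and since $u^-$ equals some $U_j$ lying strictly below the smallest $U_i$ whose true loss exceeds $\epsilon$, we have $U_j \notin \{U_i : L^{U_i}>\epsilon\}$ and hence $L^{u^-}=L^{U_j}\le\epsilon$ (by definition of $u^*$ as the smallest violating threshold). So the theorem reduces to showing $\P(\hat u \le u^*)\ge 1-\alpha$, equivalently $\P(\hat u > u^*)\le\alpha$.

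The crux, and the main obstacle, is that $\hat u$ is random, being read off from the estimated bounds $\Lub{U_i}(\alpha)$, so we cannot apply the confidence bound at $\hat u$ directly. The resolution is to route the bad event through the single deterministic threshold $u^*$. If $\hat u > u^*$, then $u^*$ fails the selection rule~\eqref{eq:uhat}, i.e.\ $\Lub{u^*}(\alpha)\le\epsilon$; but by definition of the oracle $L^{u^*}>\epsilon$, so $L^{u^*} > \Lub{u^*}(\alpha)$, meaning the upper confidence bound is violated at $u^*$. Hence $\{\hat u>u^*\}\subseteq\{L^{u^*}>\Lub{u^*}(\alpha)\}$, and because $u^*$ is fixed we may invoke the one-at-a-time validity $\P(L^{u^*}\le\Lub{u^*}(\alpha))\ge 1-\alpha$ to conclude $\P(\hat u>u^*)\le\alpha$. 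This is precisely why a pointwise, rather than simultaneous-over-all-$u$, confidence bound suffices: the only threshold at which we ever need validity is the deterministic $u^*$. Combining the three steps yields the claim; the only loose ends are the degenerate cases where either selection set is empty, handled uniformly by the convention $\min\emptyset=+\infty$.
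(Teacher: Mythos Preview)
Your proposal is correct and follows essentially the same approach as the paper's proof: both arguments reduce to showing $\P(\hat u \le u^*)\ge 1-\alpha$ by observing that $\hat u > u^*$ forces the upper confidence bound at the deterministic threshold $u^*$ to be violated. Your version is slightly more explicit---you spell out the assumption $\ell(Y_i,Y_i)=0$, introduce $u^-$ to name the realized loss as $L^{u^-}$, and phrase the key step as a containment of events---whereas the paper compresses the first part into a one-line appeal to monotonicity of $L^u$ in $u$; substantively the two proofs are the same.
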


\begin{proof}
By the definition of $u^*$, we know $\frac 1 n \sum_{i=1}^n \ell(Y_i, \tilde Y_i)\leq \epsilon$ if $\tilde Y_i = Y_i \mathbf{1}\{U_i \geq  u^*\} + \hat Y_i \mathbf{1}\{U_i < u^*\}$. Furthermore, by monotonicity, for any labeling threshold $u'\leq u^*$ the error criterion is satisfied. Therefore, on the event that $\hat u\leq u^*$, we know that  $\frac 1 n \sum_{i=1}^n \ell(Y_i, \tilde Y_i)\leq \epsilon$.

We argue that $\P(\hat u \leq u^*) \geq 1-\alpha$ as long as $\Lub{U_i}(\alpha)$ are valid upper confidence bounds for all $U_i$. Suppose not: suppose $\hat u > u^*$. By definition, this must mean that $\Lub{u^*}(\alpha)\leq \epsilon$. But at the same time, we know $L^{u^*}>\epsilon$; therefore, it must be that $\Lub{u^*}(\alpha) < L^{u^*}$. This event happens with probability at most $\alpha$ because $\Lub{u^*}(\alpha)$ is a valid upper confidence bound,  and thus we have shown $\P(\hat u \leq u^*) \geq 1-\alpha$.
\end{proof}

Interestingly, notice that the proof only requires $\hat L^{U_i}(\alpha)$ to be valid \emph{individually}, even though we form $n$ confidence bounds. This is a consequence of the monotonicity of $L^u$ in $u$, similar in spirit to how monotonicity enables the Dvoretzky–Kiefer–Wolfowitz inequality~\cite{dvoretzky1956asymptotic} and risk-controlling prediction sets~\cite{bates2021distribution} to be free of multiplicity corrections.

\begin{algorithm}[t]
\caption{Probably Approximately Correct Labeling}
\label{alg:pac_labeling}
\begin{algorithmic}[1]
\Require unlabeled data $X_1,\dots,X_n$, predicted labels $\hat Y_1,\dots,\hat Y_n$, uncertainties $U_1,\dots,U_n$, labeling error $\epsilon$, error probability $\alpha\in(0,1)$, sample size for estimation $m$, sampling weights $\pi_1,\dots,\pi_n$
\State Sample $i_j\sim \mathrm{Unif}\left([n]\right)$ and $\xi_{i_j} \sim \mathrm{Bern}(\pi_{i_j})$ independently for $j\in[m]$
\State Collect $Y_{i_j}$ if $\xi_{i_j}=1$ for $j\in[m]$
\State Compute confidence bound $\hat L^u(\alpha) = \texttt{meanUB}\left(\{\ell^u(Y_{i_j}, \hat Y_{i_j}) \frac{\xi_{i_j}}{\pi_{i_j}}\}_{j\in[m]}; \alpha\right)$ for all $u \in \{U_i\}_{i=1}^n$
\State Let $\hat u = \min\{U_i: \hat L^{U_i}(\alpha) > \epsilon\}$
\State Collect true labels $Y_i$ for points where $U_i \geq \hat u$
\State Let $\tilde Y_{i} \leftarrow Y_{i} \mathbf{1}\{U_i \geq \hat u\} + \hat Y_i \mathbf{1}\{U_i < \hat u\}$ for all $i\in[n]$
\State For all $\{i_j\}_{j\in[m]}$ s.t. $\xi_{i_j} = 1$, (possibly) update $\tilde Y_{i_j}\leftarrow Y_{i_j}$
\Ensure labeled dataset $(X_1, \tilde Y_1),\dots,(X_n, \tilde Y_n)$
\end{algorithmic}
\end{algorithm}

It remains to provide a method to compute $\Lub{U_i}(\alpha)$. Given a hyperparameter $m$, we collect $m$ draws $\{i_1,\dots,i_m\}$ independently as $i_j \sim \mathrm{Unif}([n])$. Then, for all $j\in[m]$, we sample $\xi_{i_j} \sim \mathrm{Bern}(\pi_{i_j})$, where $(\pi_1,\dots,\pi_n)$ are arbitrary sampling weights, and collect $Y_{i_j}$ if $\xi_{i_j}=1$. This results in a dataset of $m$ i.i.d. variables $\left\{\ell(Y_{i_j}, \hat Y_{i_j})\frac{\xi_{i_j}}{\pi_{i_j}}\right\}_{j=1}^m$;
therefore, we can estimate $\Lub{u}(\alpha)$ as:
$$\Lub{u}(\alpha) = \texttt{meanUB}\left(\left\{\ell(Y_{i_j}, \hat Y_{i_j})\frac{\xi_{i_j}}{\pi_{i_j}} \mathbf{1}\{U_{i_j} \leq u\}\right\}_{j=1}^m;\alpha\right).$$
Here, $\texttt{meanUB}(\cdot;\alpha)$ is any method for computing a valid upper bound at level $1-\alpha$ on the mean from an i.i.d. sample. 
Indeed, the samples $\ell(Y_{i_j}, \hat Y_{i_j})\frac{\xi_{i_j}}{\pi_{i_j}} \mathbf{1}\{U_{i_j} \leq u\}$ are i.i.d. with mean $L^u$, since $\E[\xi_{i_j}/\pi_{i_j} | i_j] = 1$. The motivation for allowing adaptive sampling weights $\pi_i$ is to allow forming a tighter confidence bound through a careful choice of the weights, although even uniform weights $\pi_1=\dots=\pi_n = p \in (0,1)$ are a reasonable choice in practice.

There are many possible choices for $\texttt{meanUB}(\cdot;\alpha)$. For instance, this can be a nonasymptotic procedure such as the betting-based confidence intervals \cite{waudby2020estimating, shekhar2023risk}. If one is satisfied with asymptotic guarantees, then one can simply compute a confidence bound based on the central limit theorem:
\begin{equation}
\label{eq:CLT_ub}
\texttt{meanUB}(\{Z_j\}_{j=1}^m;\alpha) = \hat \mu_Z + z_{1-\alpha}  \frac{\hat \sigma_Z}{\sqrt{m}},
\end{equation}
where $\hat \mu_Z$ and $\hat \sigma_Z$ are the empirical mean and standard deviation of $\{Z_j\}_{j=1}^m$, respectively, and $z_{1-\alpha}$ is the $(1-\alpha)$-quantile of the standard normal distribution. In our experiments, we will primarily focus on procedures with nonasymptotic validity.

We summarize the overall procedure in Algorithm \ref{alg:pac_labeling} and its guarantee in Corollary \ref{cor:pac_labeling}.

\begin{cor}
\label{cor:pac_labeling}
For any valid mean upper bound subroutine \textup{\texttt{meanUB}}, Algorithm \ref{alg:pac_labeling} outputs PAC labels.
\end{cor}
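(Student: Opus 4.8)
The plan is to reduce Corollary \ref{cor:pac_labeling} directly to Theorem \ref{thm:main_theorem}. Theorem \ref{thm:main_theorem} already establishes that the labels $\tilde Y_i$ are PAC labels \emph{provided} that the quantities $\Lub{U_i}(\alpha)$ used to define $\hat u$ in \eqref{eq:uhat} are valid upper confidence bounds, i.e. $\P(L^u \leq \Lub{u}(\alpha)) \geq 1-\alpha$ for each fixed $u$. So the entire task is to verify that the specific $\Lub{u}(\alpha)$ constructed in Algorithm \ref{alg:pac_labeling} via the \texttt{meanUB} subroutine satisfies this one-at-a-time validity property, and that the $\hat u$ computed in line 4 of the algorithm coincides with the $\hat u$ of \eqref{eq:uhat}.

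First I would fix an arbitrary threshold $u \in \{U_i\}_{i=1}^n$ and examine the summands $Z_j = \ell(Y_{i_j}, \hat Y_{i_j})\frac{\xi_{i_j}}{\pi_{i_j}} \mathbf{1}\{U_{i_j} \leq u\}$. Since the indices $i_j \sim \mathrm{Unif}([n])$ are drawn independently and the $\xi_{i_j} \sim \mathrm{Bern}(\pi_{i_j})$ are independent given the indices, the $Z_j$ are i.i.d. across $j \in [m]$. The key computation is the mean: conditioning on $i_j$ and using $\E[\xi_{i_j}/\pi_{i_j} \mid i_j] = 1$, I get $\E[Z_j] = \E_{i_j}\!\left[\ell(Y_{i_j}, \hat Y_{i_j}) \mathbf{1}\{U_{i_j} \leq u\}\right] = \frac{1}{n}\sum_{i=1}^n \ell(Y_i, \hat Y_i)\mathbf{1}\{U_i \leq u\} = L^u$, exactly as asserted in the text. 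Thus $\{Z_j\}_{j=1}^m$ is an i.i.d. sample with population mean $L^u$.

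Next, by the defining property of \texttt{meanUB} as \emph{any} valid upper bound at level $1-\alpha$ on the mean of an i.i.d. sample, applying it to $\{Z_j\}_{j=1}^m$ yields $\Lub{u}(\alpha) = \texttt{meanUB}(\{Z_j\}_{j=1}^m; \alpha)$ satisfying $\P(L^u \leq \Lub{u}(\alpha)) \geq 1-\alpha$. This holds for the fixed $u$ I chose, and since $u$ was arbitrary, it holds one-at-a-time for every $u \in \{U_i\}_{i=1}^n$ — which is precisely (and only) what Theorem \ref{thm:main_theorem} requires; no simultaneous/multiplicity control is needed. Finally I would note that line 4 of Algorithm \ref{alg:pac_labeling} sets $\hat u = \min\{U_i : \Lub{U_i}(\alpha) > \epsilon\}$, identical to \eqref{eq:uhat}, and the relabeling in lines 5--7 produces exactly $\tilde Y_i = Y_i \mathbf{1}\{U_i \geq \hat u\} + \hat Y_i \mathbf{1}\{U_i < \hat u\}$. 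Invoking Theorem \ref{thm:main_theorem} then gives the conclusion.

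I do not anticipate a genuinely hard step here, since the corollary is essentially an instantiation of the theorem with a concrete confidence-bound construction. The one point requiring care — and the closest thing to an obstacle — is the i.i.d.\ and unbiasedness verification: one must be explicit that the randomness is over the two-stage sampling ($i_j$ uniform, then $\xi_{i_j}$ Bernoulli) and confirm that the importance-weighting factor $\xi_{i_j}/\pi_{i_j}$ debiases the subsampling so that $\E[Z_j] = L^u$. A minor subtlety worth flagging is that $\hat u$ is itself random and data-dependent, so one cannot claim the bound holds \emph{at} $\hat u$; but Theorem \ref{thm:main_theorem} is designed precisely to sidestep this by leveraging monotonicity of $L^u$ at the fixed oracle threshold $u^*$, so the per-$u$ validity established above is sufficient.
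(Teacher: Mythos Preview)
Your proposal is correct and follows essentially the same approach as the paper: the paper's justification of the corollary is precisely the paragraph preceding Algorithm~\ref{alg:pac_labeling}, which verifies that the importance-weighted samples are i.i.d.\ with mean $L^u$ (via $\E[\xi_{i_j}/\pi_{i_j}\mid i_j]=1$), so that \texttt{meanUB} yields a valid per-$u$ upper bound and Theorem~\ref{thm:main_theorem} applies. One tiny remark: line~7 of the algorithm \emph{optionally} overwrites some $\tilde Y_{i_j}$ with the already-collected $Y_{i_j}$, which is not literally the formula $\tilde Y_i = Y_i \mathbf{1}\{U_i \geq \hat u\} + \hat Y_i \mathbf{1}\{U_i < \hat u\}$ but can only decrease the realized error, so the PAC guarantee is preserved.
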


\subsection{Uncertainty calibration}
\label{sec:uncertainty_calibration}
The performance of PAC labeling crucially depends on the quality of the uncertainty scores. However, some data points $X_i$ might have more accurate uncertainties than others. For example, suppose we can partition the $X_i$ into two groups: on one, the model is consistently overconfident, and on the other, the model is consistently underconfident. Then, PAC labeling will overcollect expert labels for the data points in the second group. In the extreme case, imagine the model is always incorrect on data points from the first group but produces low uncertainties, and is always correct on data points from the second group but produces high uncertainties. Then, all expert labels for the second group will be collected (except in trivial cases when $\epsilon$ is too large or the second group is too small). This is clearly wasteful, especially if the second group is of significant size.

We propose uncertainty calibration as a way of mitigating this issue. One natural way of calibrating uncertainties arises when there exists a collection $\mathcal C$ of possibly overlapping clusters in the data, where each $C\in \mathcal C$ is a collection of data point indices. These clusters could be implied by externally given features (such as demographic features), or they could be discovered in a data-driven way. For the zero--one loss, we use the multicalibration algorithm from \citet{hebert2018multicalibration}, stated in Algorithm \ref{alg:calibrate} for completeness, to learn the uncertainty adjustment for each cluster. In practice, we learn the adjustment by collecting expert labels for a small subset of size $m\ll n$ of the overall dataset and applying the correction to the remainder of the dataset.

\begin{algorithm}[h]
\caption{Uncertainty Multicalibration \cite{hebert2018multicalibration}}
\label{alg:calibrate}
\begin{algorithmic}[1]
\Require uncertainties $U_1,\dots,U_m \in [0,1]$, expert labels $Y_1,\dots,Y_m$, predicted labels $\hat Y_1,\dots,\hat Y_m$, clusters $\mathcal{C}$, number of bins $B$, tolerance $\tau > 0$
\State Define bins $b_j = \left[\frac{j-1}{B}, \frac{j}{B}\right)$ for $j = 1,\dots,B$
\Repeat
    \State $\texttt{updated} \gets \textbf{False}$
    \For{each cluster $C \in \mathcal{C}$ and each bin $j = 1,\dots,B$}
        \State Let $\mathcal{I}^{C,j} = \{i \in C : U_i \in b_j\}$
        \If{$|\mathcal{I}^{C,j}| > 0$}
            \State Compute correction: $\Delta_{C,j} \gets \frac{1}{|\mathcal{I}^{C,j}|} \sum_{i \in \mathcal{I}^{C,j}} \left( \mathbf{1}\{Y_i \neq \hat Y_i\} - U_i \right)$
            \If{$|\Delta_{C,j}| > \tau$}
                \State Update: $U_i \gets U_i + \Delta_{C,j}$ for all $i \in \mathcal{I}^{C,j}$
                \State $\texttt{updated} \gets \textbf{True}$
            \EndIf
        \EndIf
    \EndFor
\Until{\texttt{updated} is \textbf{False}}
\Ensure calibrated uncertainties $U_1,\dots,U_m$
\end{algorithmic}
\end{algorithm}

\section{Multi-model labeling via the PAC router}
\label{sec:router}
In many cases, we have access to several different sources of 
non-expert predictions. 
For example, we might have labels from several different AI models, 
or from (non-expert) human annotators of varying skill levels. 
In such settings, we might hope to leverage the strengths of 
these different predictors to reduce our overall labeling cost.

Concretely, consider a setting with $k$ cheap labeling sources;
for each data point $i$, each source $\smash{j \in [k]}$ provides a predicted label
$\smash{\hat Y_i^j}$ and an uncertainty $\smash{U_i^j}$. 
Our goal is to route each data point to the most reliable source,
minimizing the number of expert labels that we need 
to collect to retain the PAC guarantee 
\eqref{eq:labeling_guarantee}. 
(We later move to a cost-sensitive setting.)
Our high-level approach is in two steps:
\begin{enumerate}
    \item First, we will learn a {\em routing model} $\smash{w_\theta: \mathcal X \to \Delta^{k-1}}$ that
    maps each data point to a distribution over the $k$ labeling sources. We use the routing model
    to find the best source $\smash{j^*_i}$ for each data point $\smash{i}$, 
    to which we assign label $\smash{\hat Y_i = \hat Y_i^{j^*_i}}$ and uncertainty $\smash{U_i = U_i^{j^*_i}}$.
    \item We then apply the PAC labeling procedure from Section \ref{sec:pac_labeling} 
    to the selected data points, using the routed labels and uncertainties.
\end{enumerate}
The main question is how to learn the routing model $w_\theta$.
Throughout, we will assume access to a small, fully labeled {\em routing dataset} of size $m$,
for which we observe $\smash{(X_i, Y_i, \{\hat{Y}_i^j, U_i^j\}_{j=1}^k)_{i=1}^m}$,
which we can use to learn the routing model.

A natural first idea (but ultimately a suboptimal one) is to maximize the
expected accuracy of the routed labels---i.e., to solve $
    \argmin_\theta \sum_{i=1}^m \sum_{j=1}^k w_{\theta,j}(X_i)   \ell(Y_i, \hat{Y}_i^j),$
where $w_{\theta,j}(X_i)$ denotes the $j$-th coordinate of $w_\theta(X_i)$.
This router is suboptimal because it fails to take into account the models' uncertainties 
as well as our error tolerance $\epsilon$. 
To see why such a router is suboptimal, consider the case where one of the labeling sources
has 100\% accuracy, but also has uniformly higher uncertainty than the other sources.
For the purposes of PAC labeling, this source is not helpful;
indeed, it will result in more expert labels being collected than if we had used the
other sources. The router, however, will be incentivized to route all points to
this source to maximize expected accuracy.

Can we route points in a way that takes into account the ultimate 
cost of the labeling procedure? 
To start, observe that the actual expected cost incurred by 
using a particular routing model $w_\theta$ is
\begin{equation}
\label{eq:uncosted_loss}
\sum_{i=1}^m \sum_{j=1}^k w_{\theta,j}(X_i) \mathbf{1}\{U_i^j \geq \hat u\},
\end{equation}
where $\hat u$ is the threshold set by the PAC labeling procedure.
Ideally, we could minimize this quantity directly, e.g., using gradient descent.
There are two barriers to doing so: first, \eqref{eq:uncosted_loss} is
non-differentiable due to the $\mathbf{1}\{\cdot\}$ term,
and second, the threshold $\hat{u}$ implicitly depends on the routing model $w_\theta$ itself.

To circumvent these issues, we first replace the indicator $\mathbf{1}\{U_i^j > \hat u\}$ with a sigmoid
$\sigma(U_i^j - \hat u)$. We then consider the following differentiable relaxation of the PAC labeling 
scheme that allows us to take gradients of our final objective with respect to the 
parameters of the routing model.

Concretely, we consider a labeling scheme based on a threshold $\tilde{u}$ 
computed in the following way.
We can approximate the PAC labeling guarantee with a weaker guarantee of 
expected average error control, then our procedure for finding $\tilde{u}$
can be written as:
\begin{equation}
\label{eq:router_equation}
    \tilde{u} \approx \min\left\{u : \mathbb{E}_{X_i, Y_i, j \sim w_\theta(X_i)}[\ell(Y_i, \hat{Y}_i^j) \cdot \mathbf{1}\{U_i^j \leq u\}] > \epsilon\right\},
\end{equation}
where the expectation over $X_i,Y_i$ denotes the empirical average over the (fixed) data points $(X_i,Y_i)$. If we again replace the indicator $\mathbf{1}\{U_i^j \leq u\}$ with a sigmoid,
then $\tilde{u}$ is the solution to the equation:
\begin{equation}
    \label{eq:smooth_threshold}
    \mathbb{E}_{X_i, Y_i}\left[ \sum_{j=1}^k w_{\theta,j}(X_i) \cdot \ell(Y_i, \hat{Y}_i^j) \cdot \sigma(\tilde{u} - U_i^j)\right] = \epsilon.
\end{equation}
By strict monotonicity of the sigmoid and positivity of the remaining terms, 
this solution is unique. Therefore, we can write it as $\tilde{u}(\theta)$, 
and use the implicit function theorem to 
compute the gradient of $\tilde{u}$ with respect to the parameters
of the routing model by differentiating both sides of the above equation:
\begin{align*}
    0 = \nabla_\theta \epsilon 
    &= \mathbb{E}_{X_i, Y_i}\left[\nabla_\theta \sum_{j=1}^k w_{\theta,j}(X_i) \cdot \ell(Y_i, \hat{Y}_i^j) \cdot \sigma(\tilde{u}(\theta) - U_i^j)\right] \\
    &= \mathbb{E}_{X_i, Y_i}\left[\sum_{j=1}^k \nabla_\theta w_{\theta,j}(X_i) \cdot \ell(Y_i, \hat{Y}_i^j) \cdot \sigma(\tilde{u}(\theta) - U_i^j) \right. \\ 
    &\qquad \qquad \left.\phantom{\sum_{j=1}^k} + w_{\theta,j}(X_i) \cdot \ell(Y_i, \hat{Y}_i^j) \cdot \sigma(\tilde{u}(\theta) - U_i^j) \cdot (1-\sigma(\tilde{u}(\theta) - U_i^j)) \cdot \nabla_\theta \tilde{u}(\theta)\right].
\end{align*}

Rearranging, we get:
\begin{equation}
    \nabla_\theta \tilde{u}(\theta) =  \frac{-\mathbb{E}_{X_i, Y_i}\left[\sum_{j=1}^k \nabla_\theta w_{\theta,j}(X_i) \cdot \ell(Y_i, \hat{Y}_i^j) \cdot \sigma(u(\theta) - U_i^j)\right]}{\mathbb{E}_{X_i, Y_i}\left[\sum_{j=1}^k w_{\theta,j}(X_i) \cdot \ell(Y_i, \hat{Y}_i^j) \cdot \sigma(u(\theta) - U_i^j) \cdot (1-\sigma(u(\theta) - U_i^j))\right]}.
\end{equation}
We can estimate the above gradient using a single expectation, by defining the 
probability distribution 
over datapoint-model pairs $(i, j)$:
\begin{equation}
    \eta_\theta(i,  j) \propto w_\theta(X_i)_j \cdot \ell(Y_i, \hat{Y}_i^j) \cdot \sigma(\tilde{u}(\theta) - U_i^j) \cdot (1-\sigma(\tilde{u}(\theta) - U_i^j)),
\end{equation}
such that
\begin{equation}
\label{eq:util}
    \nabla_\theta \tilde{u}(\theta) = - \mathbb{E}_{(i, j) \sim \eta_\theta(\cdot)} \left[ \nabla_\theta \log w_\theta(X_i)_j \cdot \frac{1}{1 - \sigma(\tilde{u}(\theta) - U_i^j)} \right].
\end{equation}

This calculation suggests a natural algorithm for training the weighter: we compute the 
``smooth threshold'' $\tilde{u}(\theta)$ by solving \eqref{eq:smooth_threshold} (e.g., via binary search);
we take a gradient step on the objective 
\begin{equation*}
    \sum_{i=1}^n \sum_{j=1}^k w_{\theta,j}(X_i) \cdot 
    \sigma(\tilde{u}(\theta) - U_i^j),
\end{equation*}
using the gradient \eqref{eq:util} to backpropagate through the threshold computation; and finally 
we repeat the above two steps until convergence.


\subsection{Recalibrating uncertainties}
\label{sec:router_learned_uncertainties}

Even with a principled way to route data points to different
models, in practice our performance will often be bottlenecked 
by the quality of the uncertainties $U_i^j$. 
In particular, if all of the models are uncalibrated 
on a given data point, then routing the point to the best-performing source 
will not yield any benefit in terms of the number of expert labels collected.
Furthermore, the uncertainty values do not 
reflect the fact that we have routed the data point to the source we 
expect to be most reliable. 
Motivated by these observations, we propose a procedure for 
simultaneously learning a routing model and a better uncertainty model.
The main idea is exactly the same as before: we will define an 
uncertainty model $u_\gamma : \mathcal X \to [0, 1]$ that maps a data point
to a new uncertainty value. To train the uncertainty model, we will 
use the same smoothed threshold procedure as before, noting 
now that the threshold $\tilde{u} = \tilde{u}(\theta, \gamma)$ depends on both the parameters 
of the routing model and the parameters of the uncertainty model.
Accordingly, we perform gradient descent to solve the optimization problem
\begin{equation}
    \min_{\theta, \gamma} \sum_{i=1}^m \sum_{j=1}^k w_{\theta,j}(X_i) 
    \cdot \sigma(\tilde{u}(\theta, \gamma) - u_\gamma(X_i)),
\end{equation}
using implicit gradients $\nabla_\theta \tilde u(\theta,\gamma)$ and $\nabla_\gamma \tilde u(\theta,\gamma)$:
\begin{align*}
    \nabla_\theta \tilde{u}(\theta, \gamma) = - \mathbb{E}_{(i, j) \sim \eta_\theta} \left[\frac{\nabla_\theta \log w_{\theta,j}(X_i)}{1 - \sigma(\tilde{u}(\theta, \gamma) - U^j_i)} \right]
    \quad \text{and} \quad
    \nabla_\gamma \tilde{u}(\theta, \gamma) = \mathbb{E}_{(i, j) \sim \eta_\theta} \left[\nabla_\gamma u_{\gamma,j}(X_i) \right].
\end{align*}
The implicit gradients are derived using similar logic as before.

\subsection{Cost-sensitive PAC router}
So far, we have treated the $k$ cheap labeling sources as if they are free
(or vanishingly cheap, compared to the cost of the expert labeler).
In practice, however, we may want to take the cost of the labeling sources into account.
For example, these different sources may represent running experiments with different 
numbers of crowd workers, or with public APIs that have different costs.
Suppose each labeling source $j$ has a per-label cost $c_j$, 
and that the cost of the expert labeler is $c_{\mathrm{expert}}$.
To incorporate costs, we use the same idea as the previous two sections,
aiming to directly optimize the expected cost incurred by the labeling procedure.
Our expected cost becomes
\begin{equation}
    \sum_{i=1}^m \mathbb{E}_{j \sim w_\theta(X_i)} \left[ c_j \cdot \mathbf{1}\{U_i^j < \hat u \} + c_{\mathrm{expert}} \cdot \mathbf{1}\{U_i^j \geq \hat u\} \right],
\end{equation}
where $\hat u$ is the threshold computed using the main PAC labeling procedure.
Just as in the previous sections, we will approximate this threshold 
with a smoothed threshold $\tilde{u}$ and use the implicit function theorem to 
derive the gradient of $\tilde{u}$ with respect to the parameters of the routing model
and the uncertainty model. Finally, we replace the indicators in the above objective
with sigmoids to get a fully differentiable objective, and perform gradient descent.

\section{Experiments}
\label{sec:exps}

We evaluate PAC labeling on a series of real datasets, spanning natural language processing, computer vision, and proteomics. 
We repeat each experiment $1000$ times and report the mean and standard deviation of the save in budget, i.e., the percentage of data points that are \emph{not} expert labeled. We also report the $(1-\alpha)$-quantile of the empirical error $\frac 1 n \sum_{i=1}^n \ell(Y_i, \tilde Y_i)$ (which is supposed to be upper bounded by $\epsilon$). We plot the budget save against the realized error for $50$ of the $1000$ trials.
We fix $\alpha=0.05$ throughout and vary $\epsilon$. All of the analyzed datasets come with expert labels collected by the authors of the original study, which we use to evaluate the error of PAC labeling. All code for reproducing these experiments is available at \url{https://github.com/tijana-zrnic/pac-labels/}.

\subsection{PAC labeling with a single model}
\label{sec:exps_single_model}

We begin with the single-model case. In addition to PAC labeling, we consider two baselines. The first is the ``naive'' baseline, which collects expert labels for all points where the model's uncertainty is above a fixed threshold, such as $10\%$ or $5\%$. The second baseline is the method that only uses the AI labels, without using any expert labels.

\paragraph{Discrete labels.} First we study the problem of collecting discrete labels; thus, we use the zero--one loss, $\ell(Y_i, \tilde Y_i) = \mathbf{1}\{Y_i \neq \tilde Y_i\}$.
We consider several text annotation tasks from computational social science: collecting binary labels of whether a text contains misinformation ($Y_i\in\{\texttt{misinfo}, \texttt{real}\}$) \cite{gabriel2022misinfo}, labels of media headline stance on global warming, i.e.~whether the headline agrees that global warming is a serious concern ($Y_i\in\{\texttt{agree}, \texttt{neutral},  \texttt{disagree}\}$) \cite{luo2020detecting}, and labels of political bias of media articles ($Y_i \in\{\texttt{left},  \texttt{center}, \texttt{right}\}$) \cite{baly2020we}. We use predicted labels $\hat Y_i$ from GPT-4o, collected by \citet{gligoric2024can}. For the uncertainties $U_i$, we use GPT's verbalized confidence scores \cite{tian2023just}; that is, we prompt the model to state its confidence in the answer. Additionally, we consider image labeling on ImageNet and ImageNet v2. We use the ResNet-152 from \cite{he2016deep} to obtain $\hat Y_i$, and set $U_i = 1 - p_{\max}(X_i)$, where by $p_{\max}(X_i)$ we denote the maximum softmax output given image $X_i$. We use the betting algorithm of \citet{waudby2020estimating} (Theorem 3) as the mean upper bound subroutine in the algorithm. In Appendix \ref{app:asymptotic} we include analogous results with the simpler, asymptotic mean upper bound \eqref{eq:CLT_ub}.

We summarize the results in Table \ref{tab:classification_text}, Table \ref{tab:classification_vision},  and Figure \ref{fig:classification}. Using a fixed uncertainty threshold such as $5\%$ or $10\%$ results in highly variable results across datasets; sometimes the naive baseline is valid, sometimes it is not, and when it is valid often it is conservative. The approach of using AI labels alone achieves error that is far above the nominal. PAC labeling achieves error that fluctuates tightly around $\epsilon$, and the budget saves range between $14\%$ and $60\%$ depending on the difficulty of the labeling.

\begin{table*}[t!]
\small
\centering
\begin{adjustbox}{center}
\begin{tabular}{l|l|rrrr}
\toprule
\multirow{2}{*}{\textbf{Dataset}}                & \multirow{2}{*}{\textbf{Metric}}                   & \multicolumn{3}{c}{\textbf{Method}}   \\
&  & PAC labeling & Naive ($U_i\geq 0.1$)  & Naive ($U_i\geq 0.05$) & AI only \\
\midrule

\multirow{2}{*}{\textbf{\shortstack[l]{Media bias }}} 
& {Budget save (\%)} & (13.79 $\pm$ 3.38)\% & 17.76\% & 8.35\%  &  --- \\
& {Error} & \hlprimarytab{4.10\%}  & \hlprimarytab{2.95\%} & \hlprimarytab{1.10\%} & \hlsecondarytab{37.72\%}    \\
\midrule             
\multirow{2}{*}{\textbf{\shortstack[l]{Stance on \\ global warming}}} 
& {Budget save (\%)} & (28.09 $\pm$ 3.28)\% & 62.51\% & 25.10\%  &  --- \\
& {Error} & \hlprimarytab{4.57\%} & \hlsecondarytab{10.13\%}  & \hlprimarytab{0.83\%} & \hlsecondarytab{24.79\%}    \\

\midrule
\multirow{2}{*}{\textbf{{\shortstack[l]{Misinformation}}}} 
& {Budget save (\%)} & (18.12 $\pm$ 4.93)\%  & \negthickspace 50.44\% & \negthickspace 2.65\%  & ---  \\
& {Error} & \hlprimarytab{3.80\%} & \hlsecondarytab{7.07\%} & \hlprimarytab{0.10\%} & \hlsecondarytab{18.62\%} \\

\bottomrule
\end{tabular}
\end{adjustbox}
\caption{\textbf{PAC labeling text datasets with GPT-4o.} We set $\epsilon=0.05$. PAC labeling \hlprimarytab{meets the error criterion}, the AI only baseline has a~\hlsecondarytab{large error}, and the fixed threshold baseline is sometimes valid and sometimes not. Even when it is valid, it can be conservative.}
\label{tab:classification_text}
\end{table*}

\begin{table*}[t!]
\small
\centering
\begin{adjustbox}{center}
\begin{tabular}{l|l|rrrr}
\toprule
\multirow{2}{*}{\textbf{Dataset}}                & \multirow{2}{*}{\textbf{Metric}}                   & \multicolumn{3}{c}{\textbf{Method}}   \\
&  & PAC labeling & Naive ($U_i\geq 0.1$)  & Naive ($U_i\geq 0.05$) & AI only \\
\midrule

\multirow{2}{*}{\textbf{{\shortstack[l]{ImageNet}}}} 

& {Budget save (\%)} & (59.64 $\pm$ 1.49)\% &  60.28\%  &  52.79\%  & ---  \\
& {Error} & \hlprimarytab{4.73\%} & \hlprimarytab{3.15\%} & \hlprimarytab{2.00\%} & \hlsecondarytab{21.69\%} \\
\midrule
                               
\multirow{2}{*}{\textbf{\shortstack[l]{ImageNet v2}}} 
& {Budget save (\%)} & 
(39.07 $\pm$ 2.67)\% & 46.05\% & 39.07\%  &  --- \\
& {Error} & \hlprimarytab{4.74\%} & \hlprimarytab{4.31\%}  & \hlprimarytab{2.62\%} & \hlsecondarytab{35.33\%}    \\
\bottomrule
\end{tabular}
\end{adjustbox}
\caption{\textbf{PAC labeling image datasets with ResNet-152.} We set $\epsilon=0.05$. PAC labeling and the fixed threshold baseline \hlprimarytab{meet the error criterion} and the AI only baseline has a~\hlsecondarytab{large error}. Even when it is valid, the fixed threshold baseline can be conservative.}
\label{tab:classification_vision}
\end{table*}

\begin{figure}[h!]
\centering
\includegraphics[width = 0.8\textwidth]{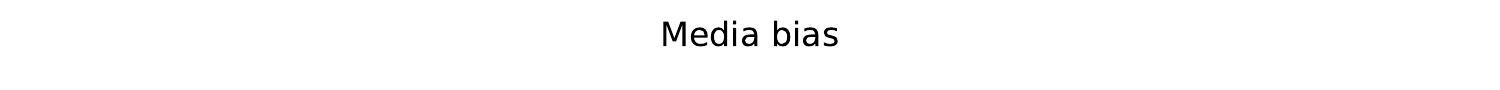}
\includegraphics[width = 0.29\textwidth]{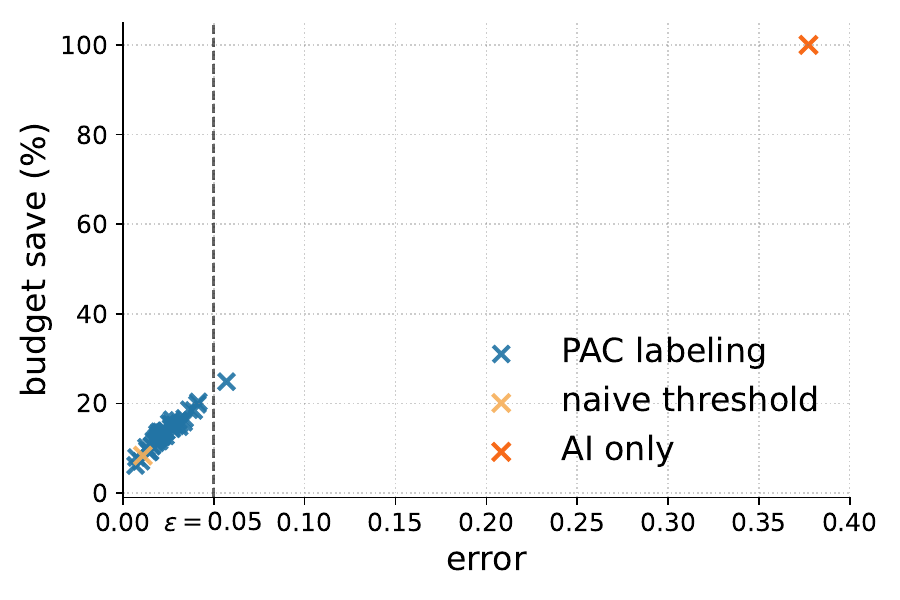}
\includegraphics[width = 0.29\textwidth]{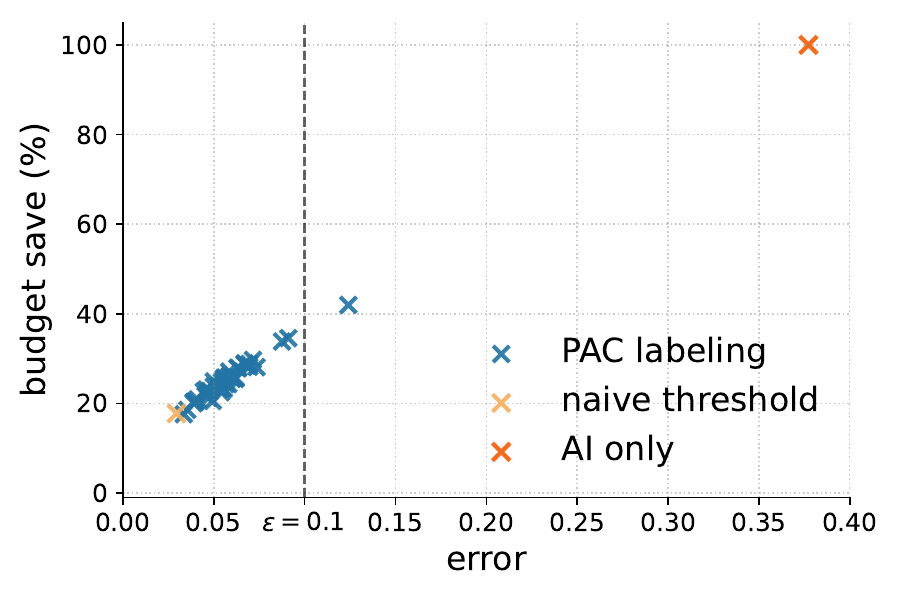}
\includegraphics[width = 0.29\textwidth]{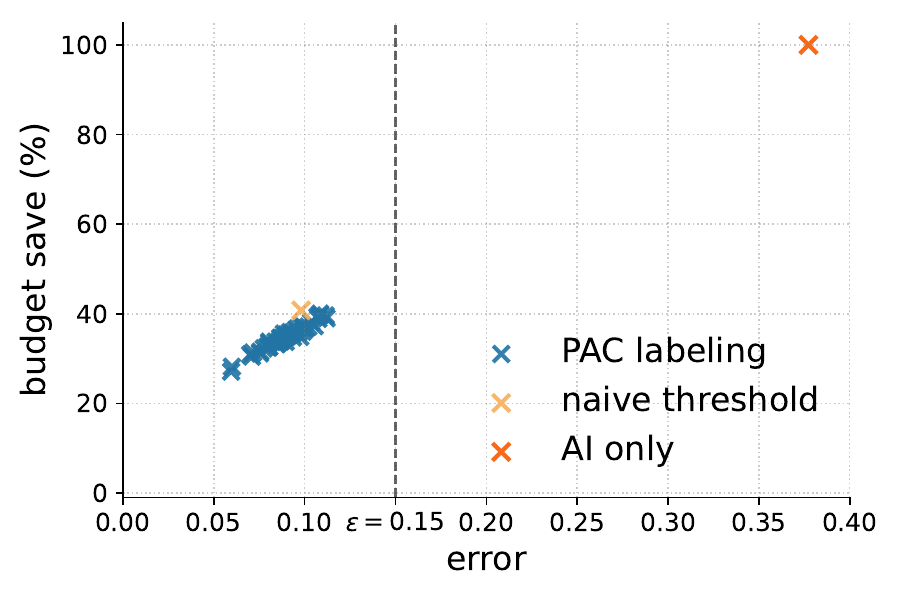}
\includegraphics[width = 0.8\textwidth]{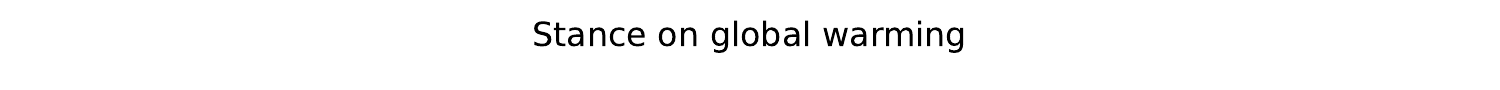}
\includegraphics[width = 0.29\textwidth]{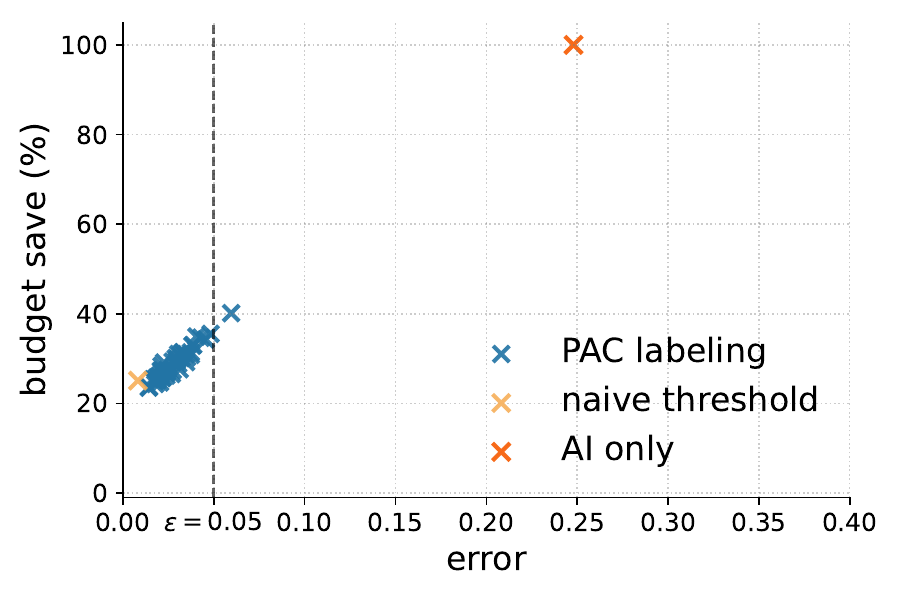}
\includegraphics[width = 0.29\textwidth]{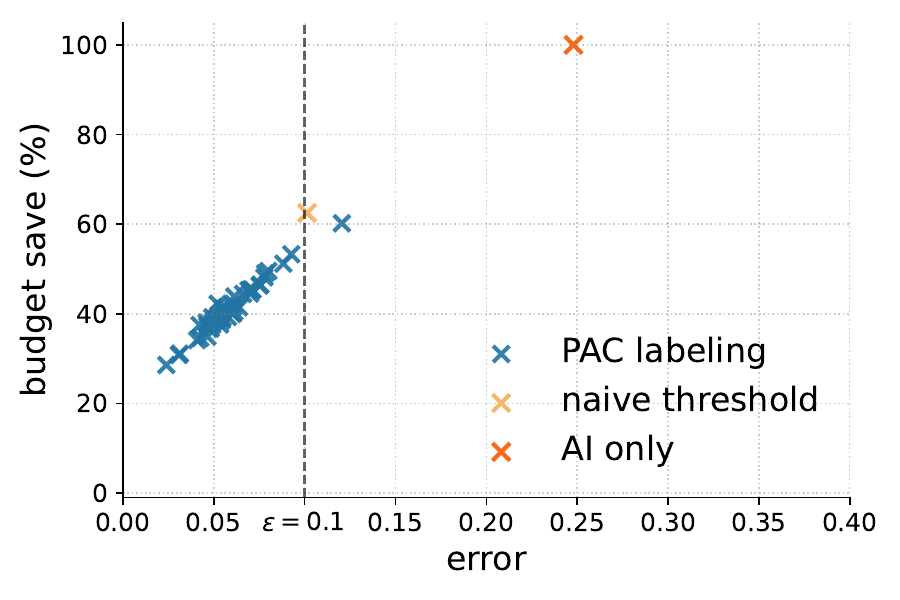}
\includegraphics[width = 0.29\textwidth]{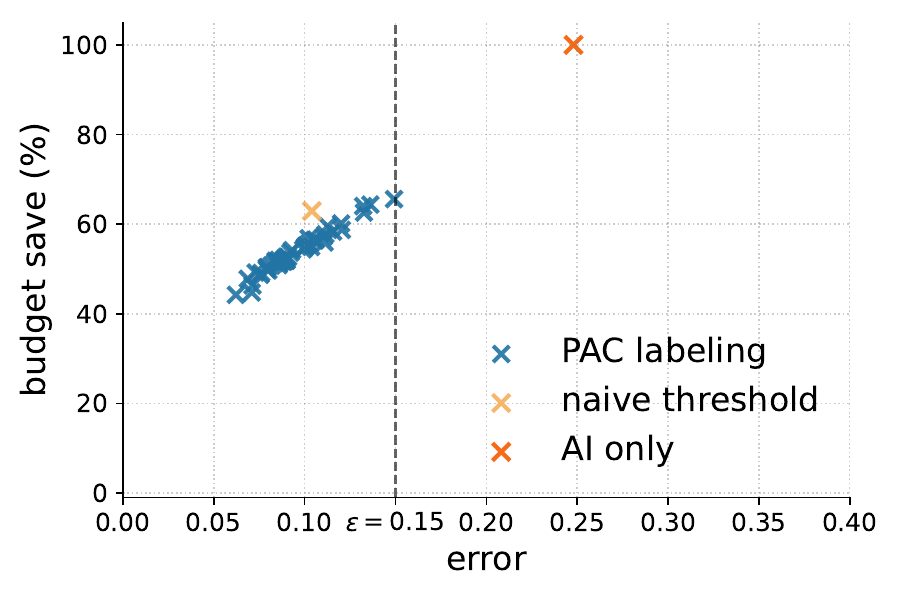}
\includegraphics[width = 0.8\textwidth]{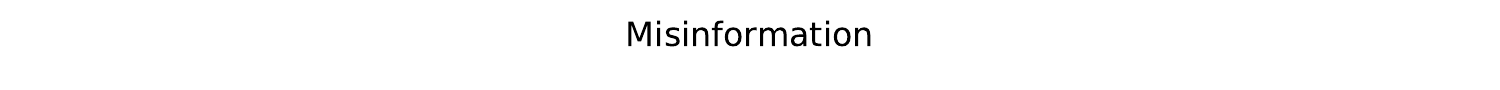}
\includegraphics[width = 0.29\textwidth]{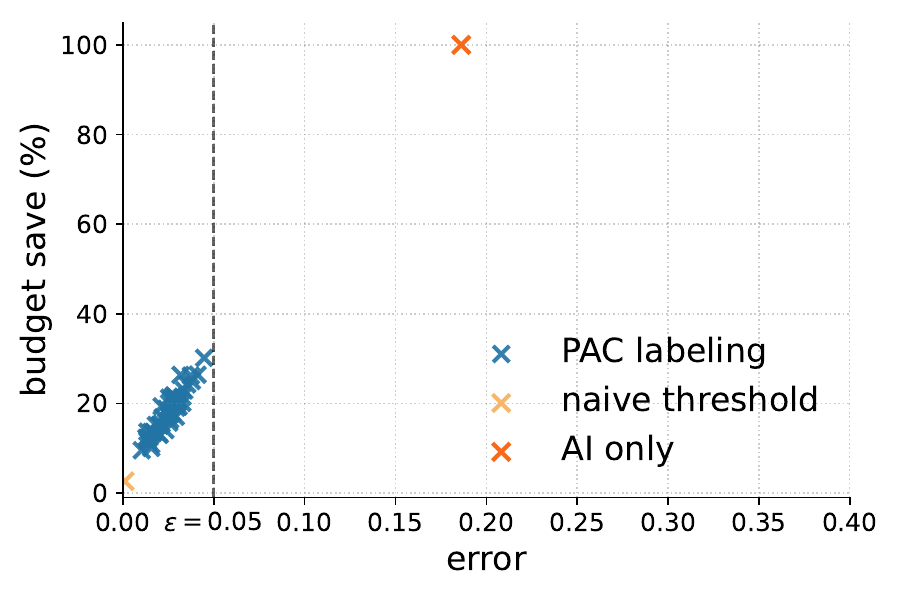}
\includegraphics[width = 0.29\textwidth]{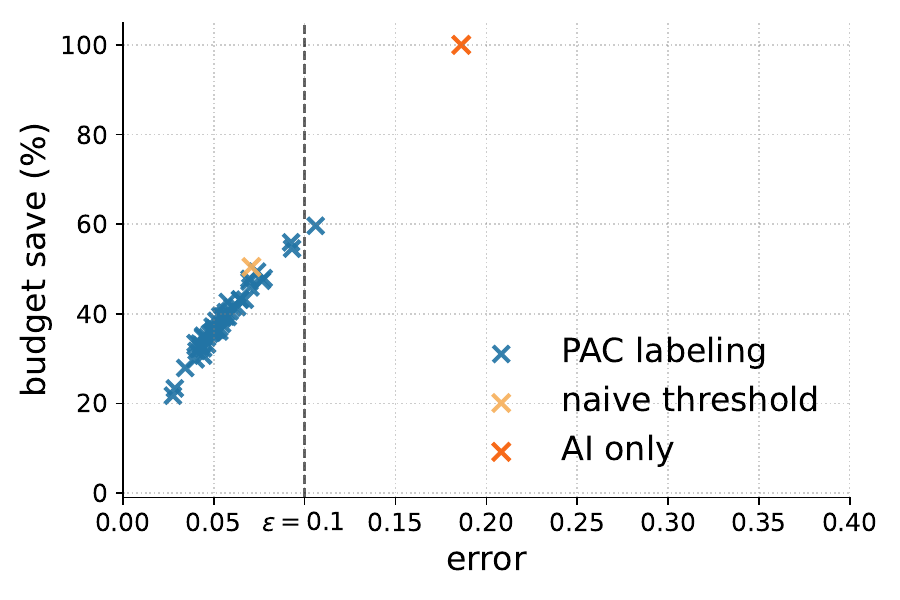}
\includegraphics[width = 0.29\textwidth]{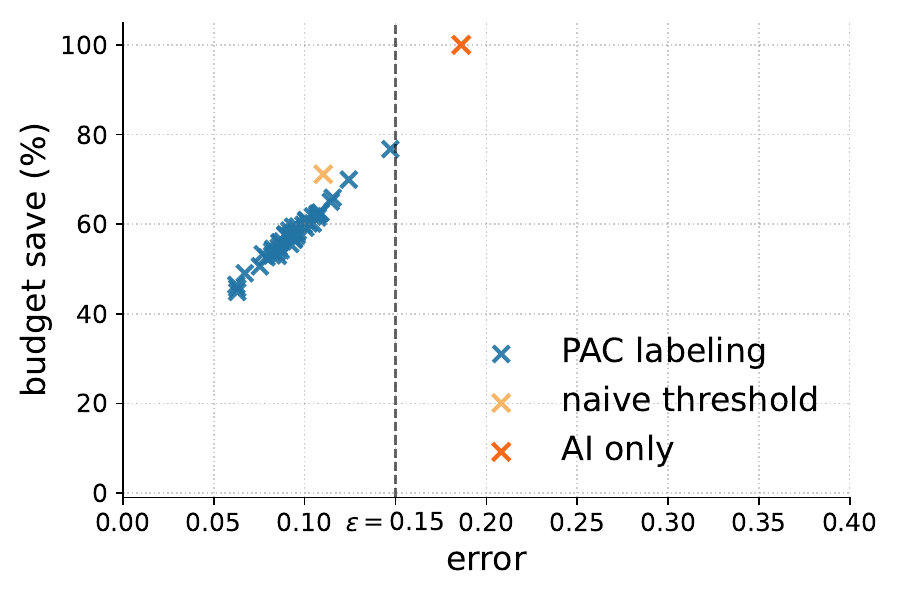}
\includegraphics[width = 0.8\textwidth]{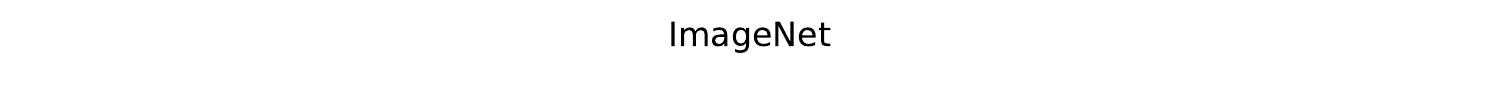}
\includegraphics[width = 0.29\textwidth]{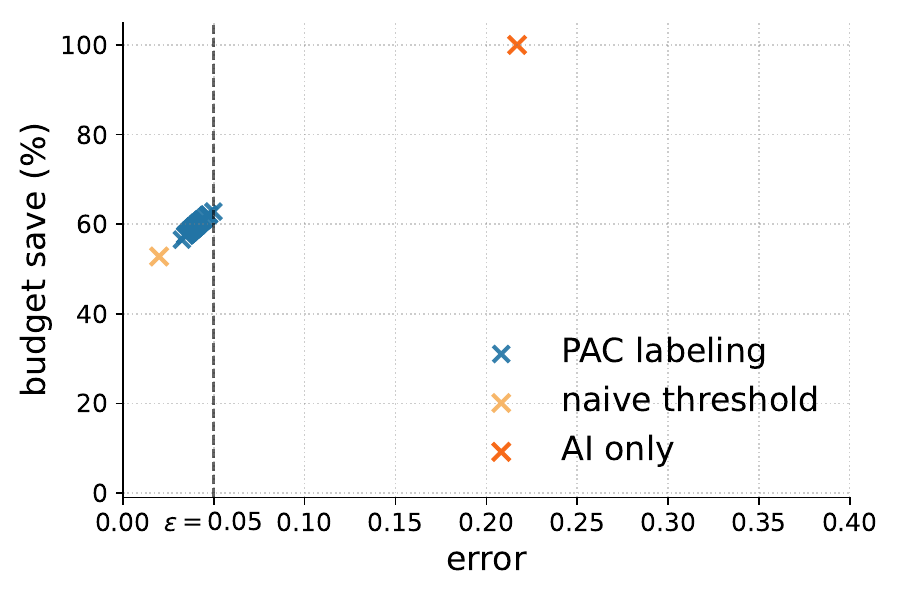}
\includegraphics[width = 0.29\textwidth]{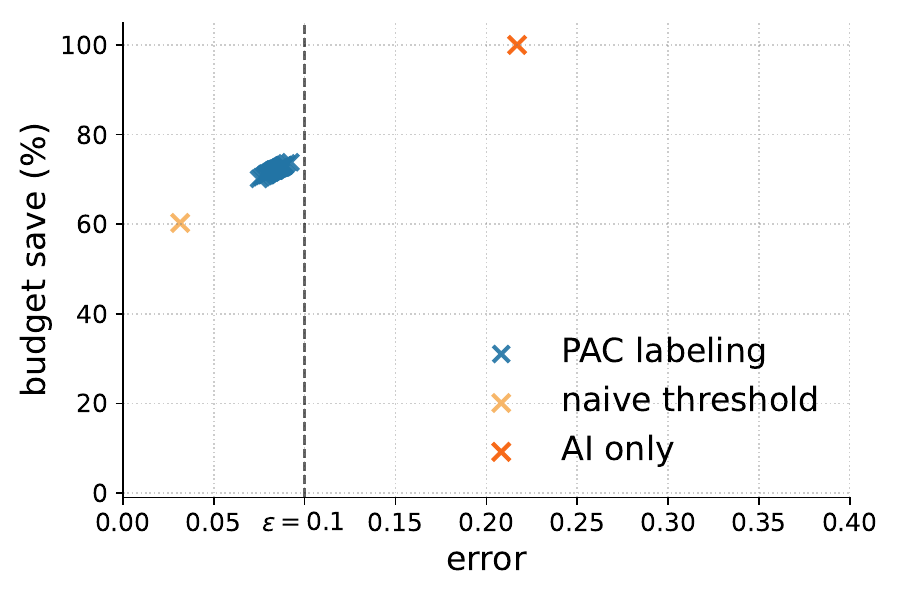}
\includegraphics[width = 0.29\textwidth]{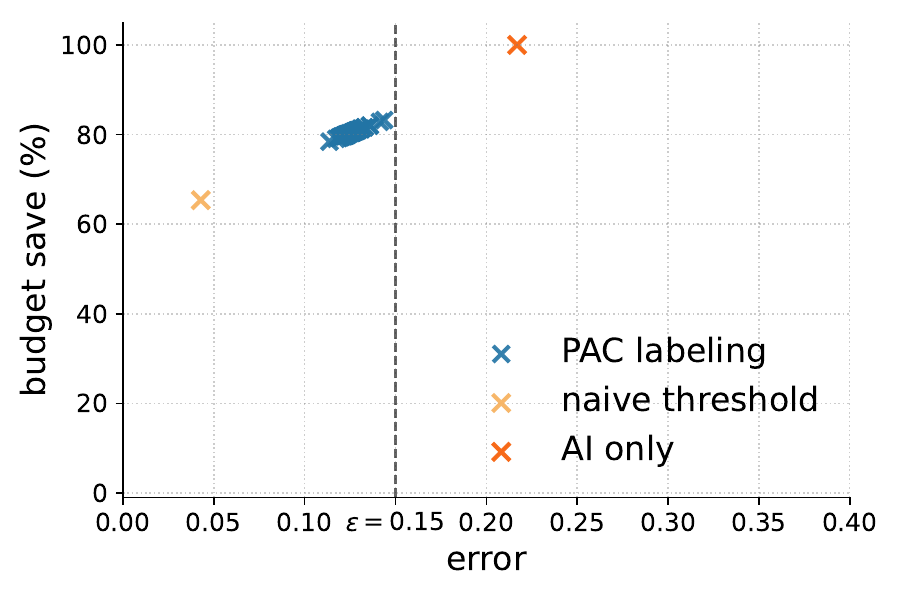}
\includegraphics[width = 0.8\textwidth]{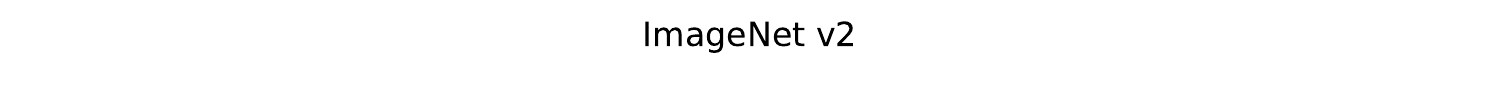}
\includegraphics[width = 0.29\textwidth]{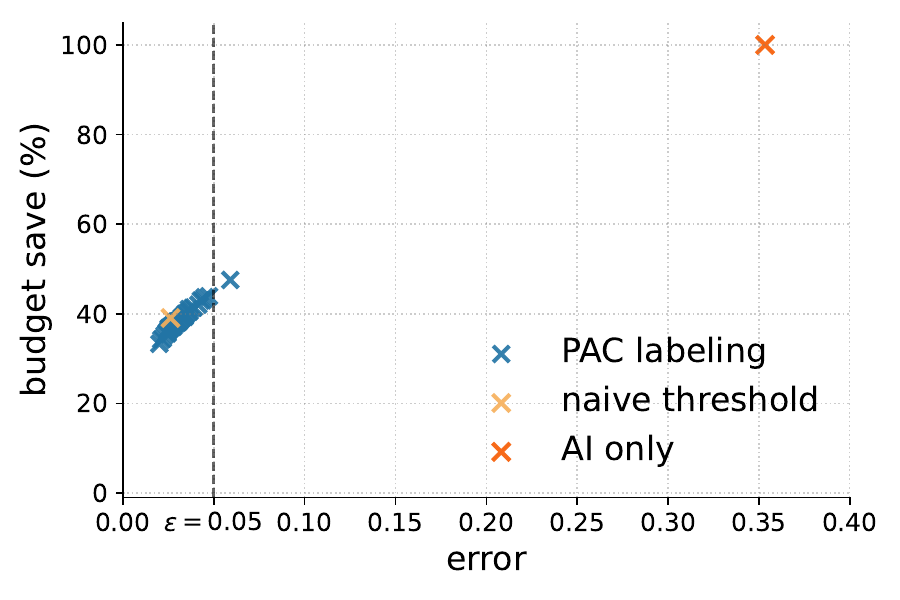}
\includegraphics[width = 0.29\textwidth]{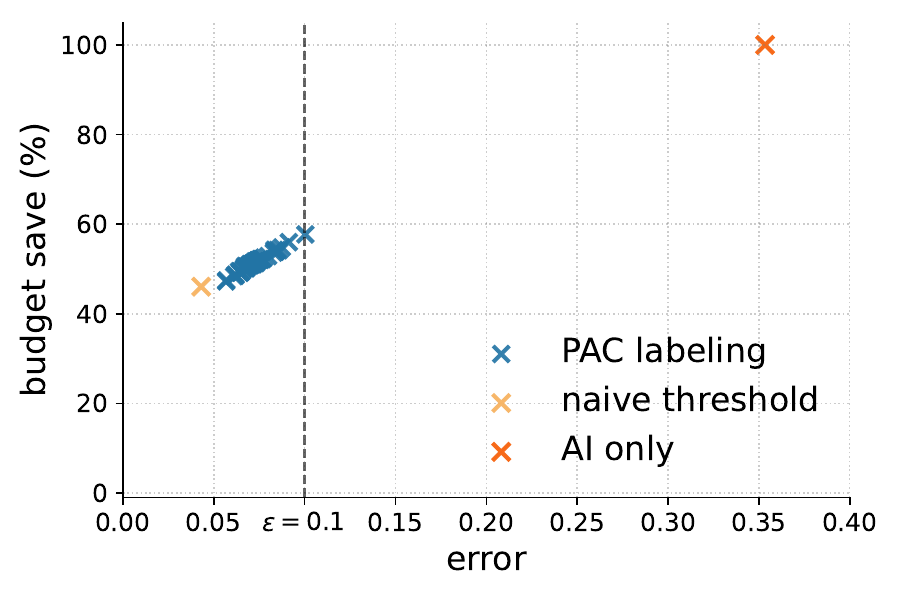}
\includegraphics[width = 0.29\textwidth]{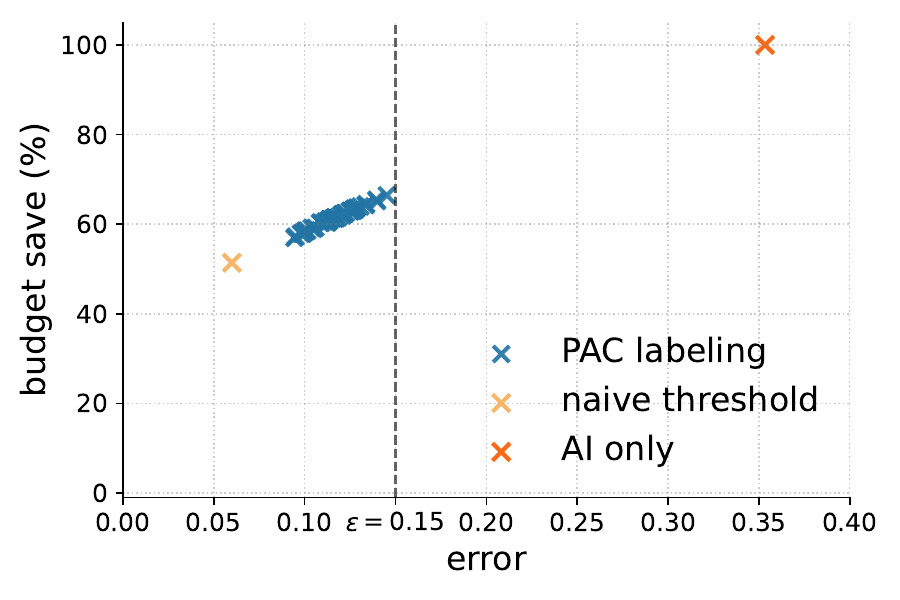}
\caption{\textbf{PAC labeling for discrete labels.} Realized error and save in budget for PAC labeling, the naive thresholding baseline, and the AI only baseline. Each row and column correspond to a different dataset and value of $\epsilon$ (denoted by vertical dashed line), respectively. For PAC labeling, we plot the realized error and save in budget for $50$ randomly chosen trials. For the naive thresholding baseline, we collect expert labels for all points with $U_i \geq \epsilon$.}
\label{fig:classification}
\end{figure}

\clearpage

\paragraph{Continuous labels.} By choosing the appropriate loss, PAC labeling is applicable much more generally. We consider two tasks.
The first is sentiment analysis~\cite{socher2013recursive}. The goal is to provide a real-valued sentiment score $Y_i\in[0,1]$ of a phrase, higher indicating more a positive sentiment. We use the squared loss, $\ell(Y_i, \tilde Y_i)=(Y_i - \tilde Y_i)^2$. We use GPT-4o to collect predicted labels $\hat Y_i$ and uncertainties $U_i$. In particular, we prompt GPT to predict an interval $[a_i,b_i]$ for the label $Y_i$, and we set $\hat Y_i = \frac{a_i+b_i}{2}$ and use the length of the interval as the uncertainty score, $U_i = b_i-a_i$. The second task is protein structure prediction \cite{jumper2021highly}. Here, $Y_i$ are experimentally derived structures and $\hat Y_i$ are structures predicted via AlphaFold \cite{jumper2021highly}. We use the mean squared deviation (MSD), the standard measure of protein structure quality, as the loss $\ell$. For context, two experimental structures for the \emph{same} protein have a gap of around $0.36$ in terms of MSD.  For the uncertainties $U_i$, we use the average predicted local distance difference test (pLDDT), AlphaFold's internal measure of local confidence. We use the CLT upper bound \eqref{eq:CLT_ub} as the mean upper bound subroutine in the algorithm.

We summarize the results in Table \ref{tab:regression} and Figure \ref{fig:regression}. Across varying $\epsilon$, PAC labeling tightly controls the error while saving a nontrivial fraction of expert labels. The AI only baseline does not meet the desired error criterion.

\begin{table*}[t!]
\small
\centering
\begin{adjustbox}{center}
\begin{tabular}{l|l|rrrr}
\toprule
\multirow{2}{*}{\textbf{Dataset}}                & \multirow{2}{*}{\textbf{Metric}}                   & \multicolumn{3}{c}{\textbf{Method}}   \\
&  & PAC ($\epsilon=0.005$) & PAC ($\epsilon=0.01$) & PAC ($\epsilon=0.015$) 
& AI only \\
\midrule
\multirow{2}{*}{\textbf{{\shortstack[l]{Sentiment analysis}}}} 
& {Budget save (\%)} & (16.03 $\pm$ 2.49)\% & (33.25 $\pm$ 3.47)\% &  (50.86 $\pm$ 3.93)\% &
---  \\
& {Error} & \hlprimarytab{0.004} & \hlprimarytab{0.009} & \hlprimarytab{0.013} 
& \hlsecondarytab{0.021} \\
\midrule
&  & PAC ($\epsilon=0.36$) & PAC ($\epsilon=0.64$) & PAC ($\epsilon=1.0$) & AI only \\
\midrule

\multirow{2}{*}{\textbf{{\shortstack[l]{Protein folding}}}} 
& {Budget save (\%)} & (19.93 $\pm$ 1.54)\% & (26.47 $\pm$ 3.37)\% &  (33.99 $\pm$ 3.76)\% &
---  \\
& {Error} & \hlprimarytab{0.367} & \hlprimarytab{0.608} & \hlprimarytab{0.944} 
& \hlsecondarytab{3.58} \\
\bottomrule
\end{tabular}
\end{adjustbox}
\caption{\textbf{PAC labeling for continuous labels.} PAC labeling (approximately)~\hlprimarytab{meets the error criterion}, while the AI only baseline has a~\hlsecondarytab{large error}.}
\label{tab:regression}
\end{table*}

\begin{figure}[t!]
\centering
\includegraphics[width = 0.8\textwidth]{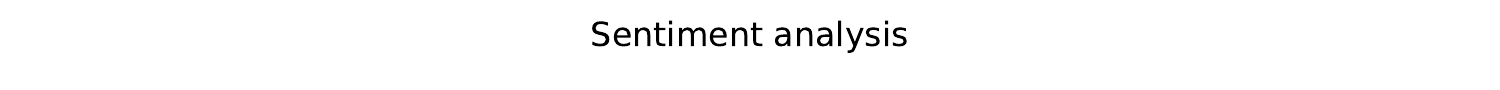}
\includegraphics[width = 0.29\textwidth]{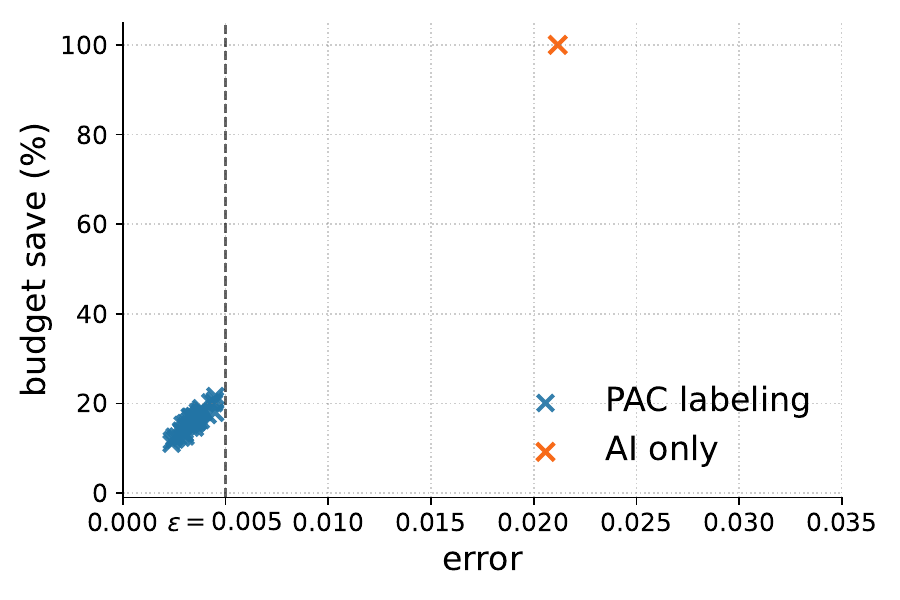}
\includegraphics[width = 0.29\textwidth]{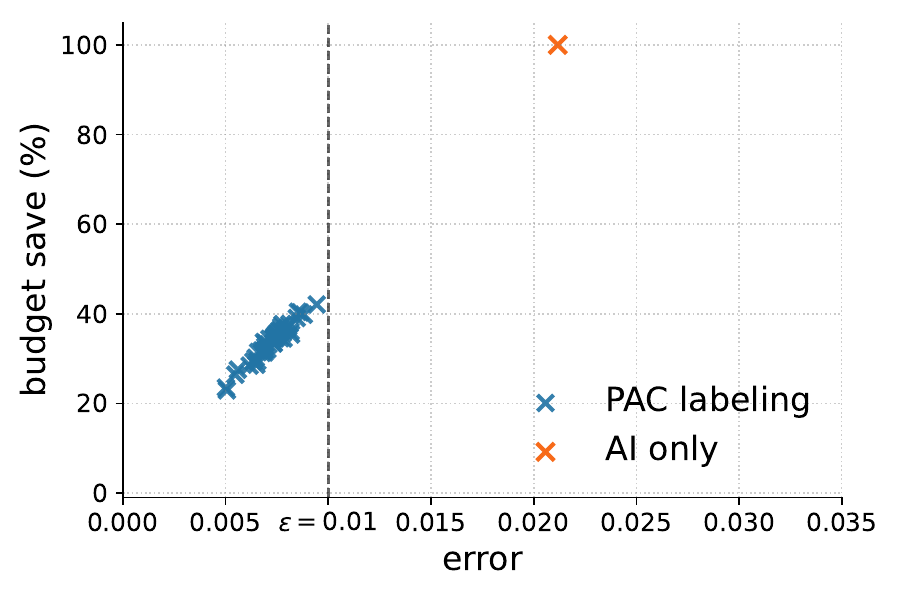}
\includegraphics[width = 0.29\textwidth]{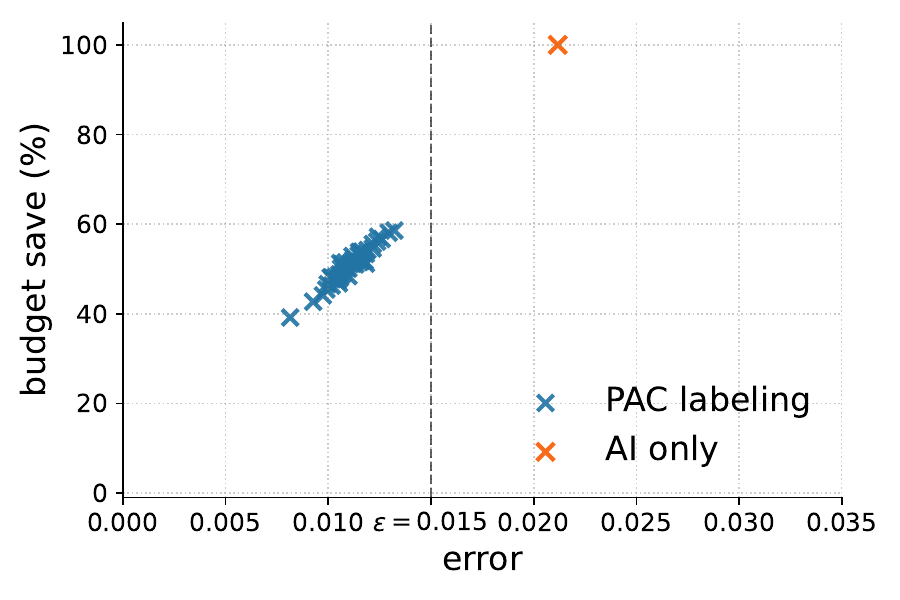}
\includegraphics[width = 0.8\textwidth]{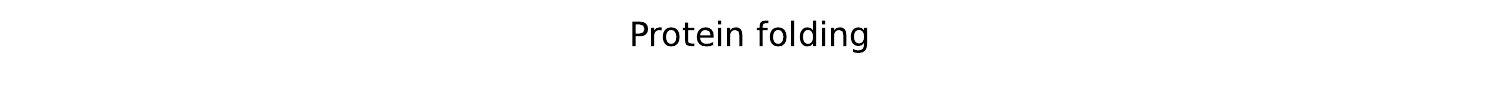}
\includegraphics[width = 0.29\textwidth]{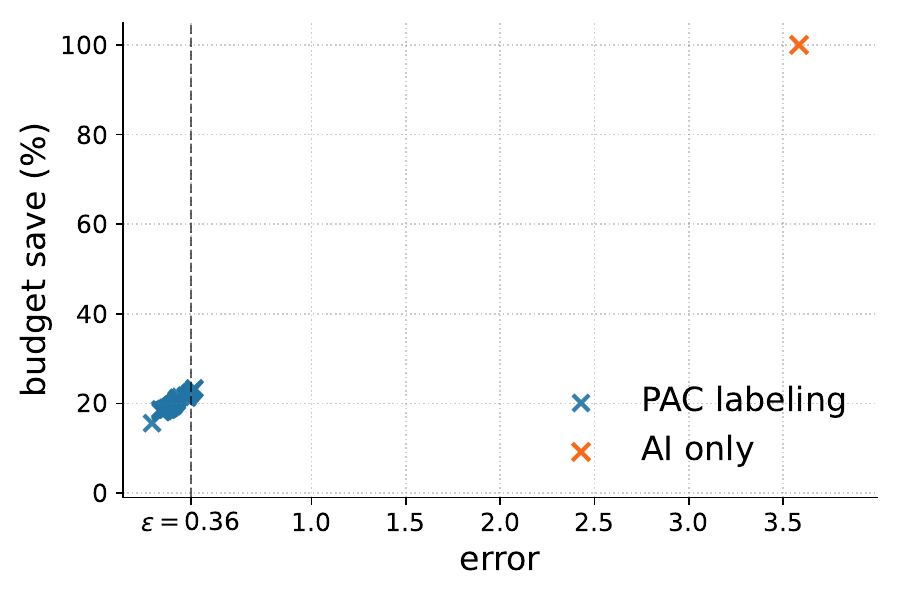}
\includegraphics[width = 0.29\textwidth]{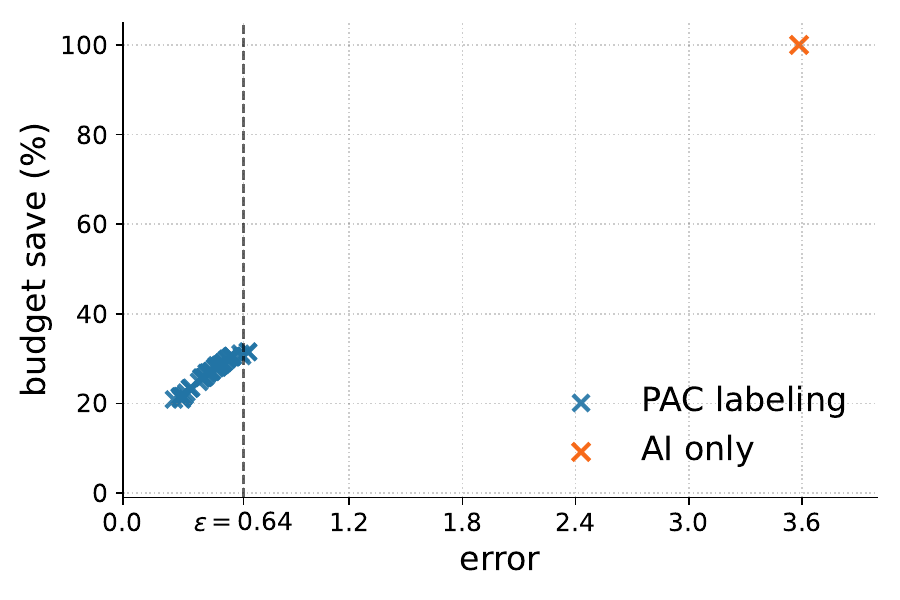}
\includegraphics[width = 0.29\textwidth]{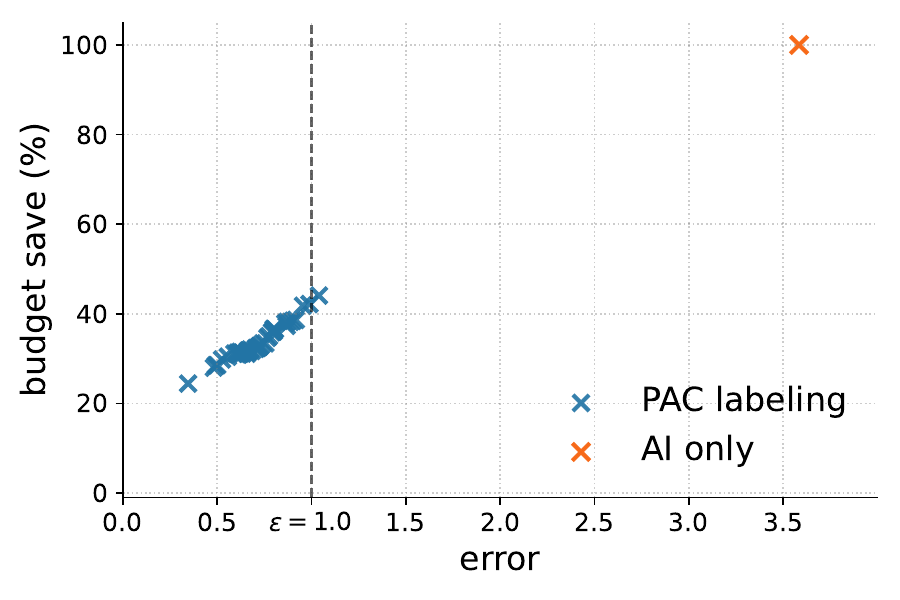}
\caption{\textbf{PAC labeling for continuous labels.} Realized error and save in budget for PAC labeling and the AI only baseline. Each row and column correspond to a different dataset and value of $\epsilon$ (denoted by vertical dashed line), respectively. For PAC labeling, we plot the realized error and save in budget for $50$ randomly chosen trials.}
\label{fig:regression}
\end{figure}

\paragraph{Uncertainty calibration.} 
Calibrating uncertainties is a simple way to improve the performance of PAC labeling.
In Table \ref{tab:uncertainty_calibration}, we show the results of PAC labeling
with GPT-4o on the media bias dataset \cite{baly2020we}, with and without
uncertainty calibration. Recall that each 
entry in this dataset corresponds to a 
news article, with true labels $Y_i \in 
\{\texttt{left}, \texttt{center}, \texttt{right}\}$ indicating the political bias of the article; 
predicted labels $\smash{\hat{Y}_i}$ capturing GPT-4o's
estimate of the label based on the article contents;
and corresponding GPT-4o uncertainties $U_i$.
We use a very simple calibration procedure: 
we use GPT-4o to cluster the articles into five clusters based on how conservative/liberal their source
(e.g., CNN, Fox News, NYT, etc.) is,
and we treat each article's cluster assignment as a
group label $G_i$.
Selecting the number of bins $B = 3$, we iterate through each group and uncertainty bin and additively adjust the uncertainties 
to match the average correctness using a small calibration set, as described in Section \ref{sec:uncertainty_calibration}.
Even in this simple setting (where the group labels 
are disjoint and derived only from the article source), 
calibration leads to an improvement in the budget save.

\begin{table*}[t!]
    \small
    \centering
    \begin{adjustbox}{center}
 \begin{tabular}{l|l|rr}
\toprule
\multirow{2}{*}{\textbf{Dataset}} & \multirow{2}{*}{\textbf{Metric}} & \multicolumn{2}{c}{\textbf{Method}} \\
& & PAC (before calibration) & PAC (after calibration) \\
\midrule
\multirow{2}{*}{\textbf{Media bias}} 
& Budget save (\%) & (13.68 $\pm$ 3.19)\% & \negthickspace (16.72 $\pm$ 2.81)\% \\
& Error & \hlprimarytab{4.10\%} & \hlprimarytab{4.22\%} \\
\bottomrule
\end{tabular}
    \end{adjustbox}
    \caption{\textbf{Uncertainty calibration.} We set $\epsilon=0.05$. PAC labeling with calibrated uncertainties (right) leads to higher saves than PAC labeling without calibration (left).
    In either case, PAC labeling~\hlprimarytab{meets the error criterion}.
    }
    \label{tab:uncertainty_calibration}
\end{table*}

\begin{table*}[t!]
\small
\centering
\begin{adjustbox}{center}
\begin{tabular}{l|l|rrr}
\toprule
\multirow{2}{*}{\textbf{Dataset}} & \multirow{2}{*}{\textbf{Metric}} & \multicolumn{3}{c}{\textbf{Method}} \\
& & PAC labeling (GPT-4o) & PAC labeling (Claude Sonnet) & PAC router \\
\midrule
\multirow{4}{*}{\textbf{Media bias}} 
& Budget save (\%) & (13.79 $\pm$ 3.38)\% & \negthickspace (8.41  $\pm$ 3.01)\% & \negthickspace (41.61  $\pm$ 1.50)\% \\
& Error & \hlprimarytab{4.10\%} & \hlprimarytab{4.00\%} & \hlprimarytab{4.61\%} \\
\cmidrule(lr){2-5}
& Save in cost & (188.66 $\pm$ 41.15)\% & (131.36 $\pm$ 49.20)\% & (482.04 $\pm$ 114.73)\% \\
& Error & \hlprimarytab{4.06\%} & \hlprimarytab{3.58\%} & \hlprimarytab{3.61\%} \\
\bottomrule
\end{tabular}
\end{adjustbox}
\caption{\textbf{PAC router for language models.} We set $\epsilon=0.05$. The PAC router significantly improves the budget save (top) and save in cost (bottom) compared to PAC labeling with individual models. In all cases, PAC labeling~\hlprimarytab{meets the error criterion}.}
\label{tab:router}
\end{table*}

\subsection{PAC labeling with multiple models}

Next, we consider the multi-model case. 
We revisit the problem of annotating the political bias of media articles \cite{baly2020we}. In addition to GPT-4o predictions and confidences, we also collect predictions and confidences from Claude 3.7 Sonnet. We train a PAC router, as described in Section \ref{sec:router}, to route the articles between the two language models. We simultaneously train an uncertainty model, as discussed in Section \ref{sec:router_learned_uncertainties}. We again use the betting confidence intervals \cite{waudby2020estimating} as the mean upper bound subroutine.

\paragraph{Costless predictions.} 
First we consider the setting of costless predictions, aiming only to minimize the number of collected expert labels. See Figure \ref{fig:router} (top) and Table \ref{tab:router} (top) for the results. GPT and Claude alone allow for a roughly $14\%$ and $8\%$ budget save, respectively, while by routing between the two we can save about $42\%$ of the expert label cost.

To give further intuition behind how this gain is achieved, in Figure \ref{fig:Lu} we plot the loss $L^u = \frac{1}{n} \sum_{i=1}^n \ell^u(Y_i,\hat Y_i)$ that results from collecting labels at uncertainties greater than or equal to $u$, as a function of $u$. To account for the fact that the different baselines might gives uncertainties $U_i$ of different magnitudes, without loss of generality we first map the uncertainties to their respective rank in $\{1,\dots,n\}$.
We observe that the router produces a curve $L^u$ that strictly dominates the loss curves of the individual models. This means that, for any uncertainty threshold, the resulting labeling achieves a strictly smaller error than with a single model. As a result, the critical uncertainty at which $L^u$ crosses error $\epsilon$ is significantly larger.

\paragraph{Incorporating prediction costs.}
We also consider the cost-sensitive
setting, where we take into account the costs of GPT-4o and Claude 3.7 Sonnet
labels and aim to minimize the overall labeling cost. 
We use the true current
relative costs of the two models. We set $c_{\mathrm{expert}} =1$,
$c_{\mathrm{GPT}} = 0.25$, and $c_{\mathrm{Claude}} = 0.075$. 
We show the
results in Figure \ref{fig:router} (bottom) and Table \ref{tab:router} (bottom):
cost-sensitive routing more than doubles the save in cost compared to GPT and more than triples the save compared to Claude.

\begin{figure}[t]
\centering
\includegraphics[width = 0.8\textwidth]{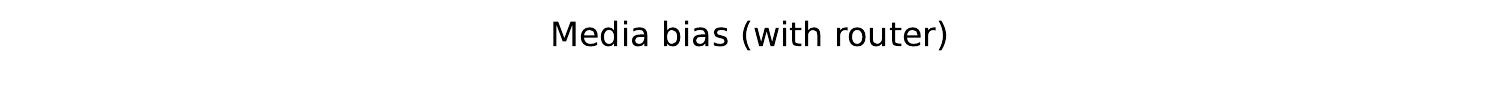}
\includegraphics[width = 0.29\textwidth]{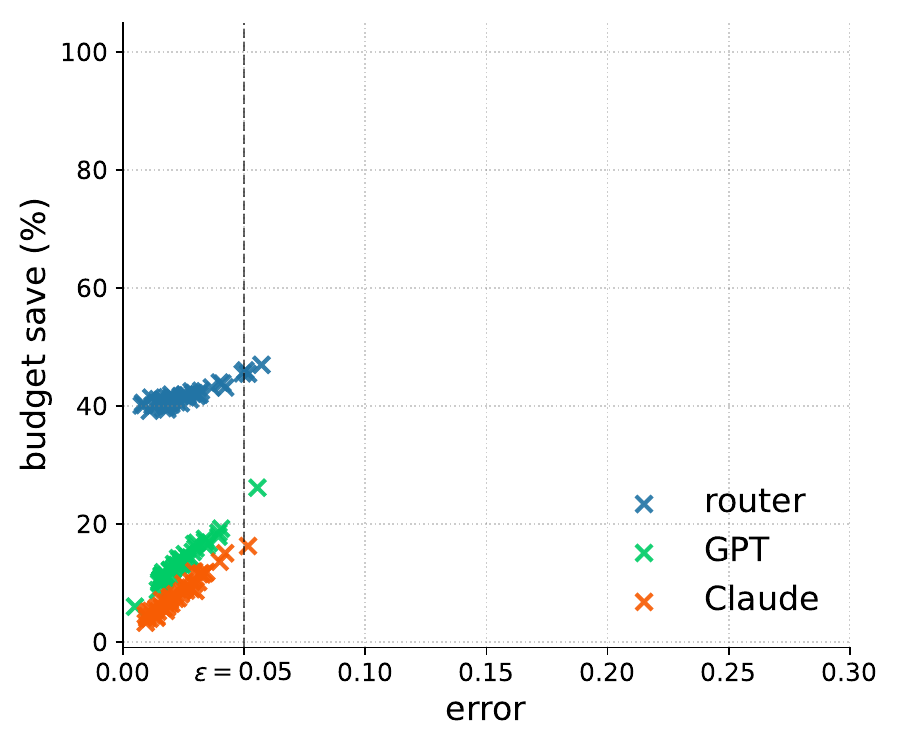}
\includegraphics[width = 0.29\textwidth]{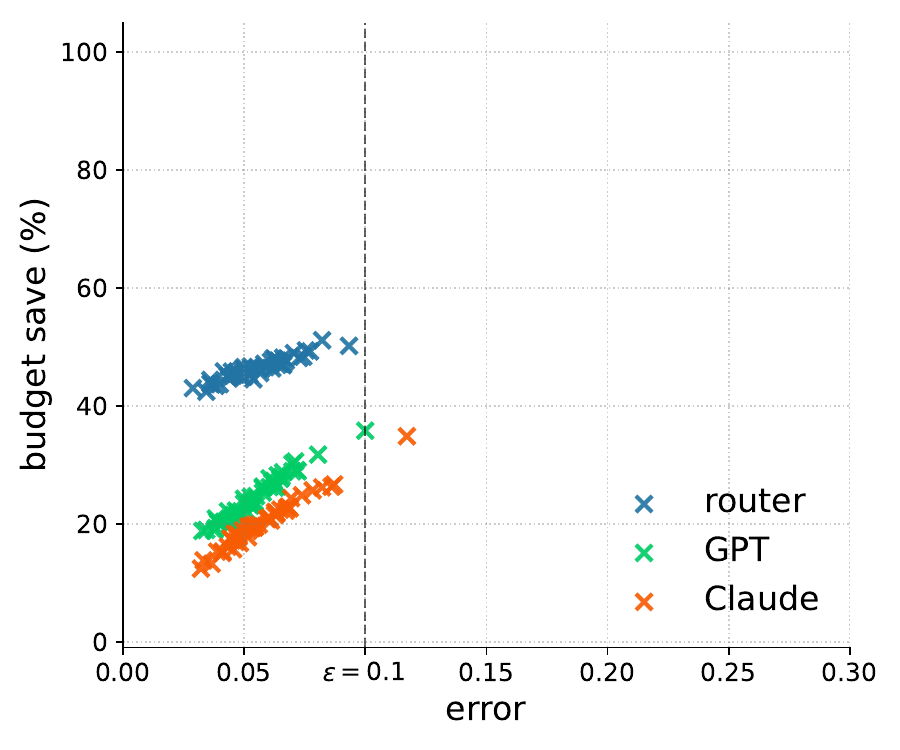}
\includegraphics[width = 0.29\textwidth]{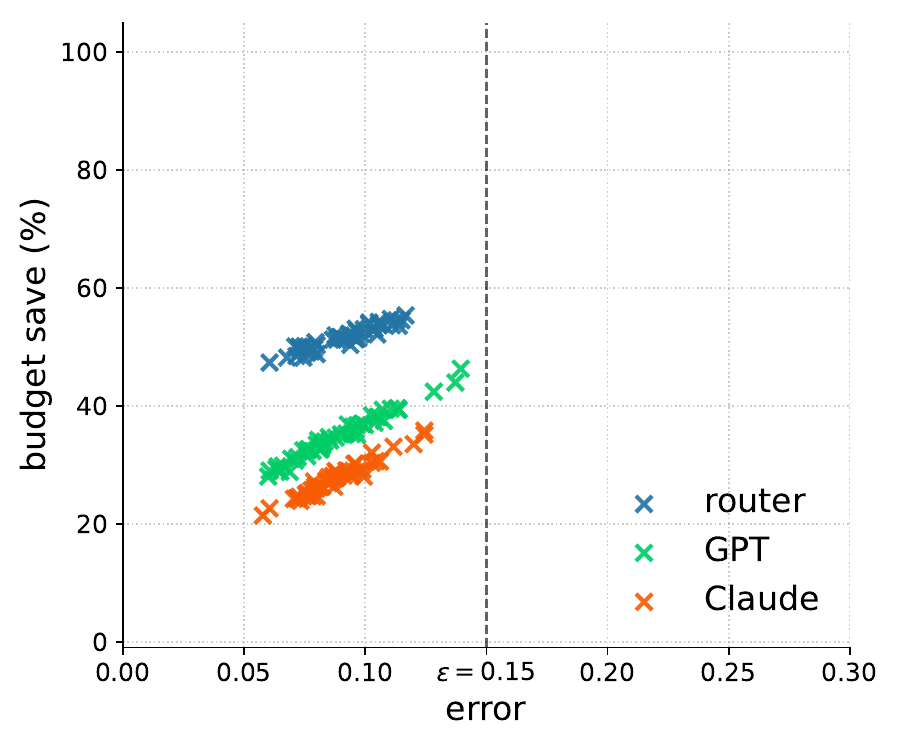}
\includegraphics[width = 0.29\textwidth]{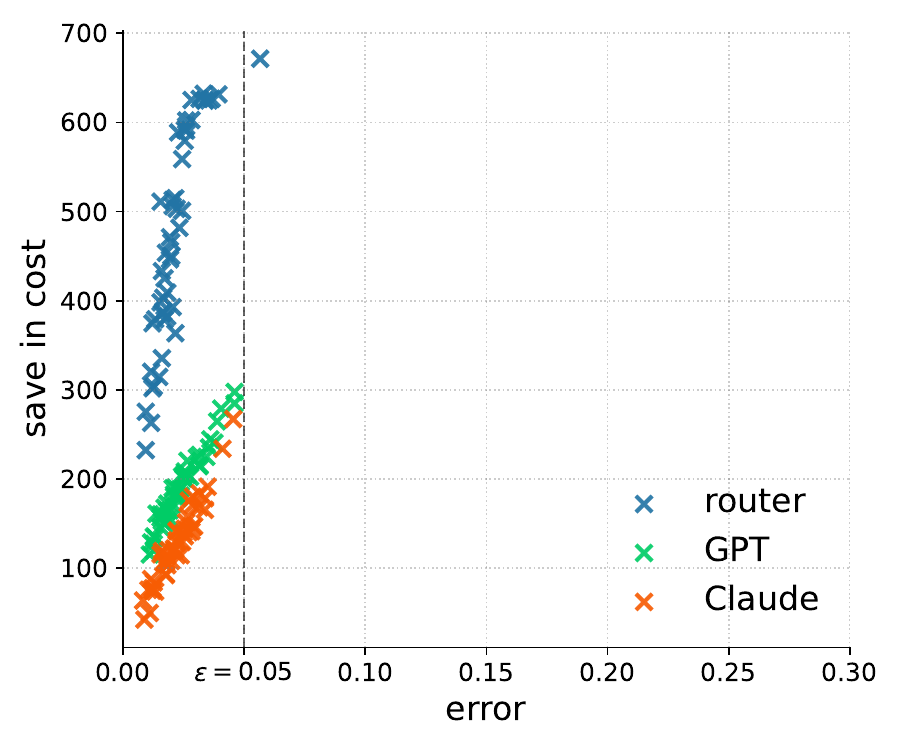}
\includegraphics[width = 0.29\textwidth]{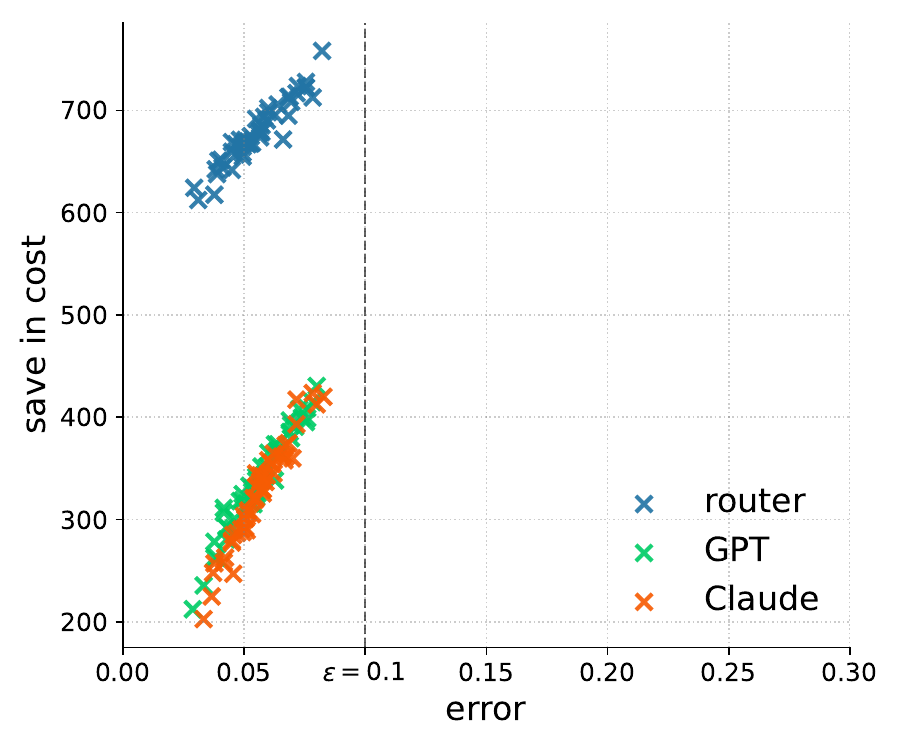}
\includegraphics[width = 0.29\textwidth]{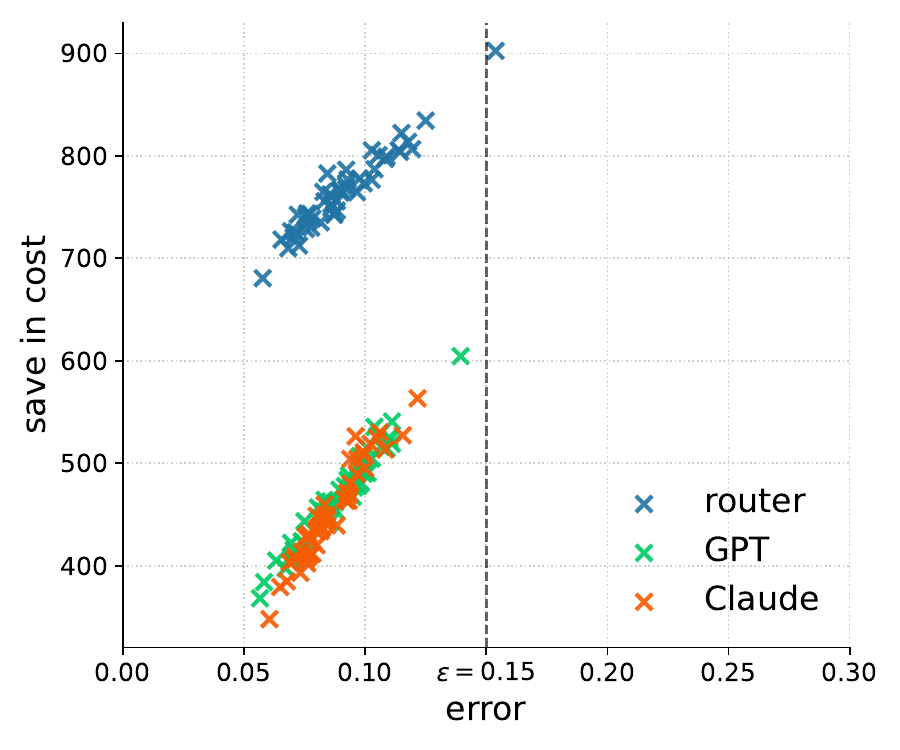}
\caption{\textbf{PAC router for language models.} Realized error and save in budget for PAC labeling with GPT, PAC labeling with Claude, and the PAC router between GPT and Claude. The top row corresponds to the costless setting; the bottom row corresponds to the cost-sensitive setting.
Each column corresponds to a different value of $\epsilon$ (denoted by vertical dashed line). For each method, we plot the realized error and save in budget for $50$ randomly chosen trials.}
\label{fig:router}
\end{figure}

\begin{figure}[h]
\centering
\includegraphics[width = 0.342\textwidth]{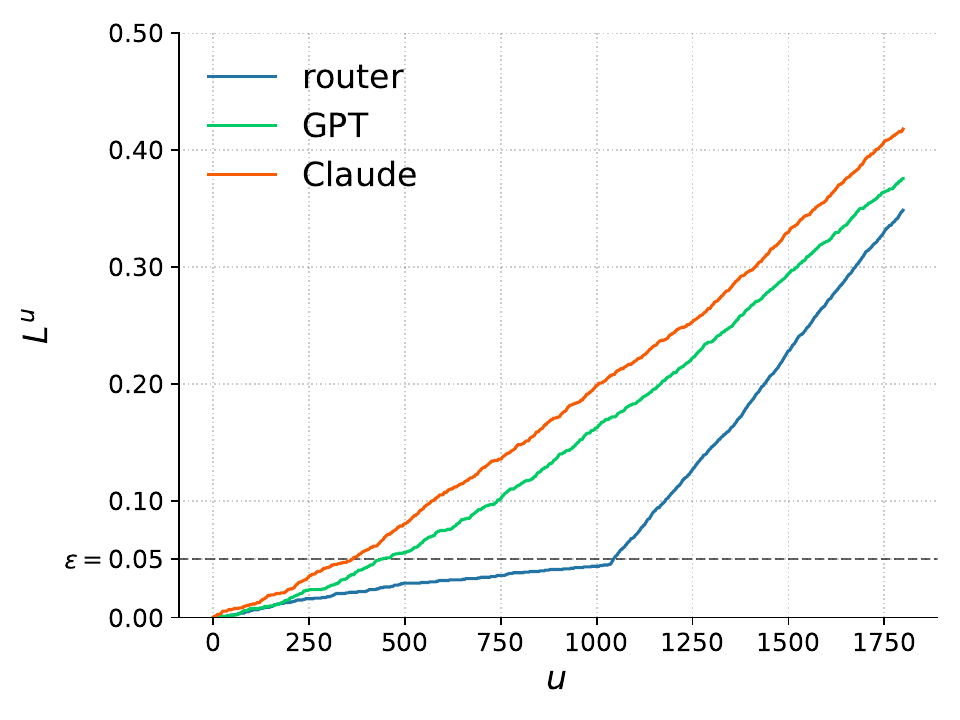}
\caption{\textbf{Loss $L^u$ after PAC routing.} Error $L^u$ after collecting labels at uncertainties greater than or equal to $u$, as a function of $u$, for GPT and Claude individually and the PAC router. We observe that the router achieves a lower error $L^u$ than the individual baselines, for all $u$.}
\label{fig:Lu}
\end{figure}

\clearpage

\bibliographystyle{plainnat}
\bibliography{refs}

\appendix

\newpage

\section{Additional results with asymptotic confidence intervals}
\label{app:asymptotic}

We include asymptotic analogues of the nonasymptotic results from Section \ref{sec:exps_single_model}. We rerun all experiments with discrete labels, this time using the asymptotic mean upper bound \eqref{eq:CLT_ub} in the construction of PAC labels.

In Table \ref{tab:classification_text_asymp} and Table \ref{tab:classification_vision_asymp} we compare PAC labeling with asymptotic and nonasymptotic guarantees on text and image datasets, respectively. We see that asymptotic confidence intervals, in addition to being easier to implement, enable larger budget saves compared to nonasymptotic intervals. The downside of relying on asymptotic guarantees is that the error rates might be slightly inflated---throughout we see error rates slightly above the nominal $5\%$. 

\begin{figure}[b]
\centering
\includegraphics[width = 0.8\textwidth]{plots/bias_title.pdf}
\includegraphics[width = 0.29\textwidth]{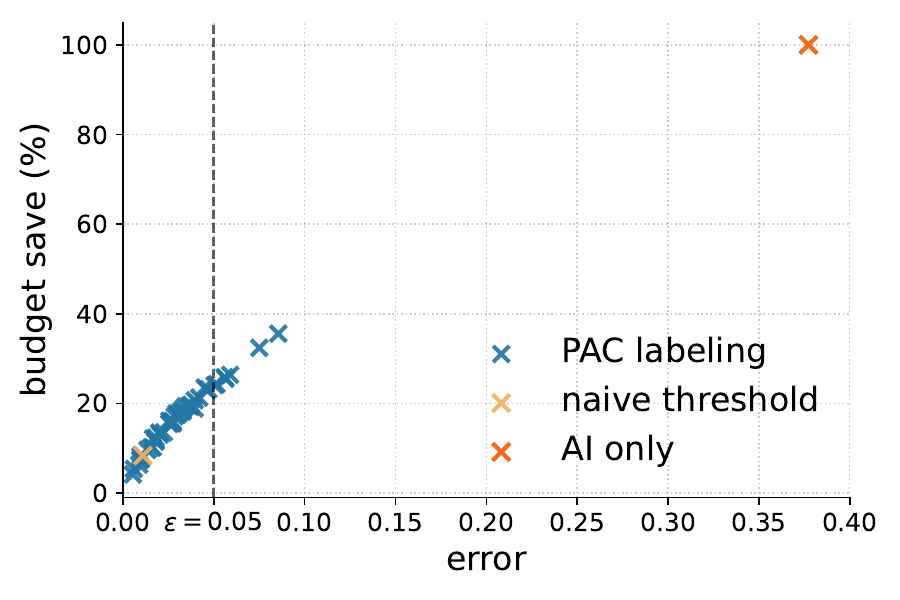}
\includegraphics[width = 0.29\textwidth]{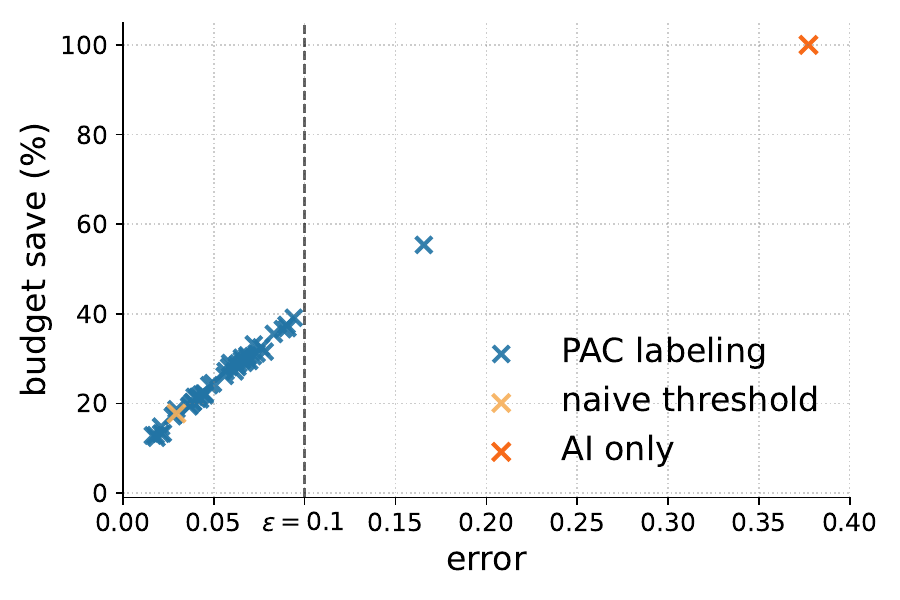}
\includegraphics[width = 0.29\textwidth]{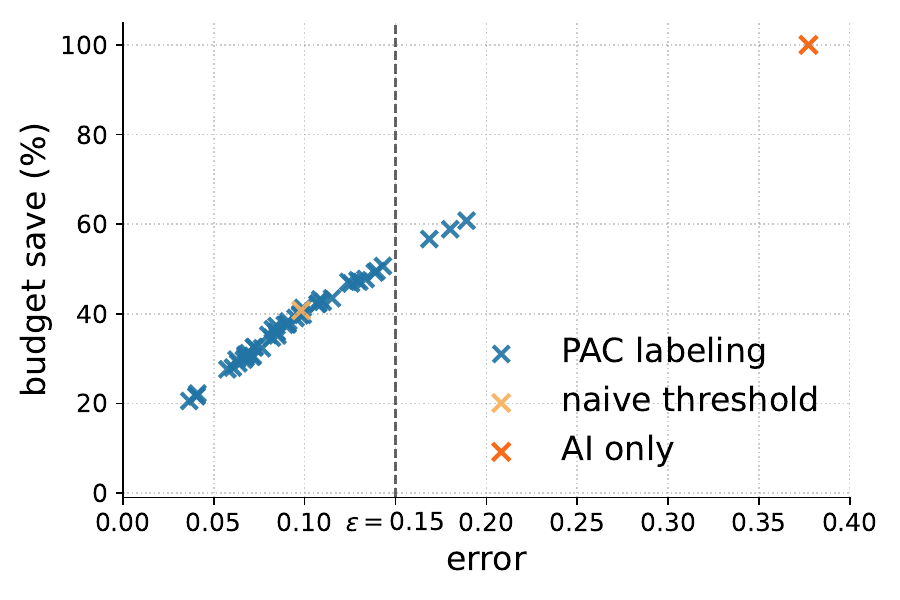}
\includegraphics[width = 0.8\textwidth]{plots/stance_title.pdf}
\includegraphics[width = 0.29\textwidth]{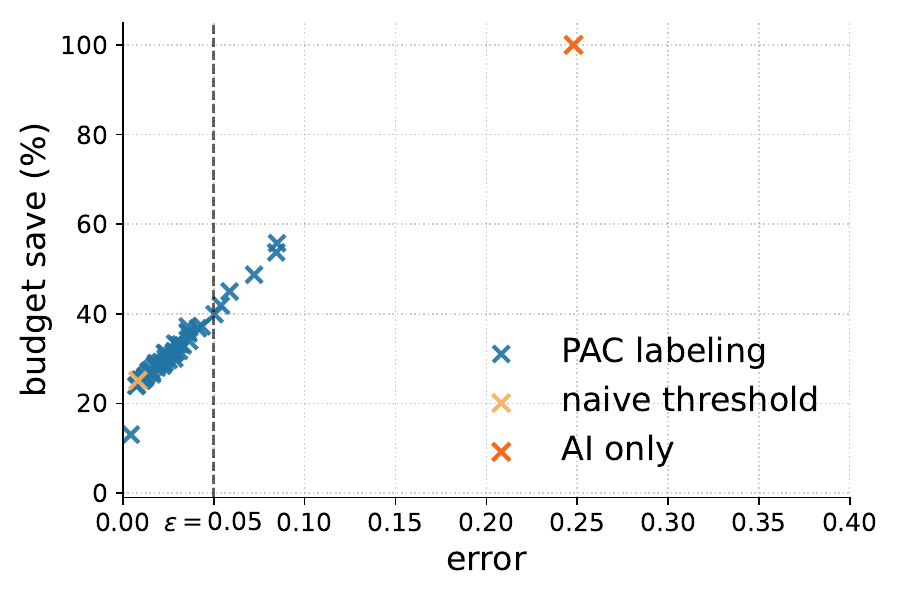}
\includegraphics[width = 0.29\textwidth]{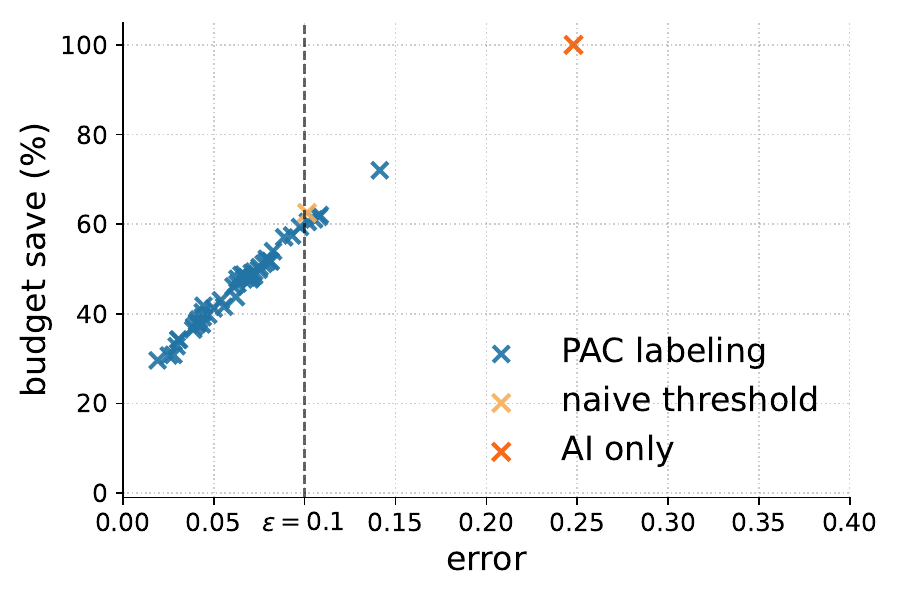}
\includegraphics[width = 0.29\textwidth]{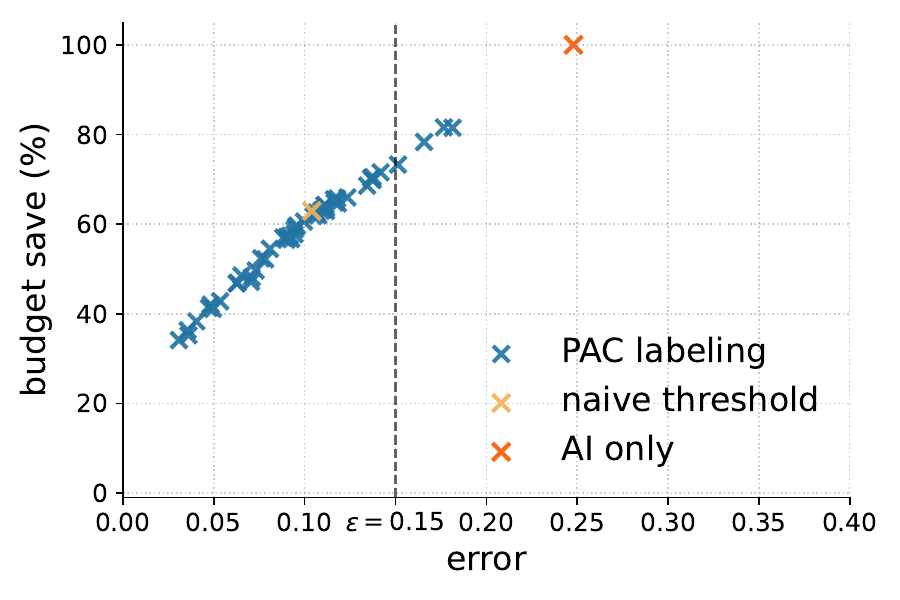}
\includegraphics[width = 0.8\textwidth]{plots/misinfo_title.pdf}
\includegraphics[width = 0.29\textwidth]{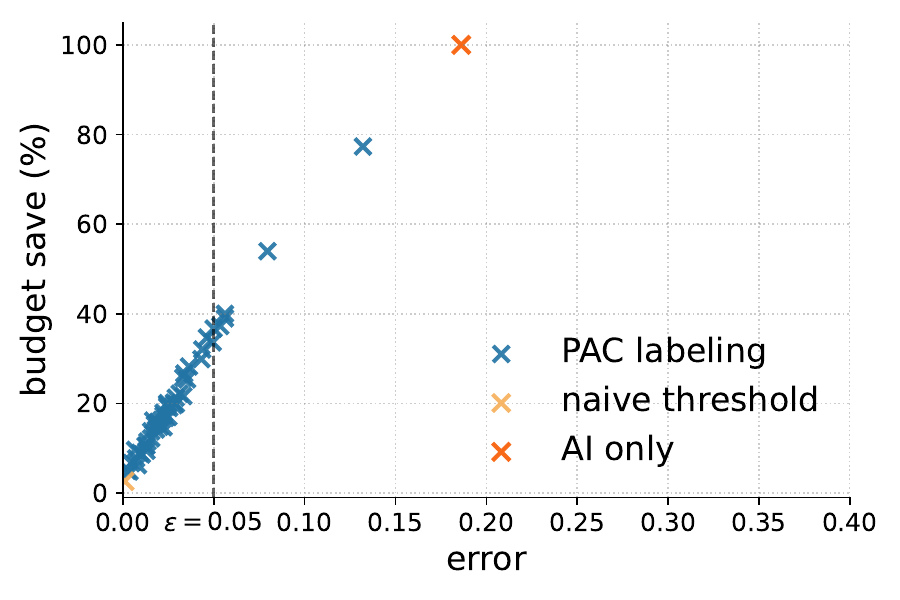}
\includegraphics[width = 0.29\textwidth]{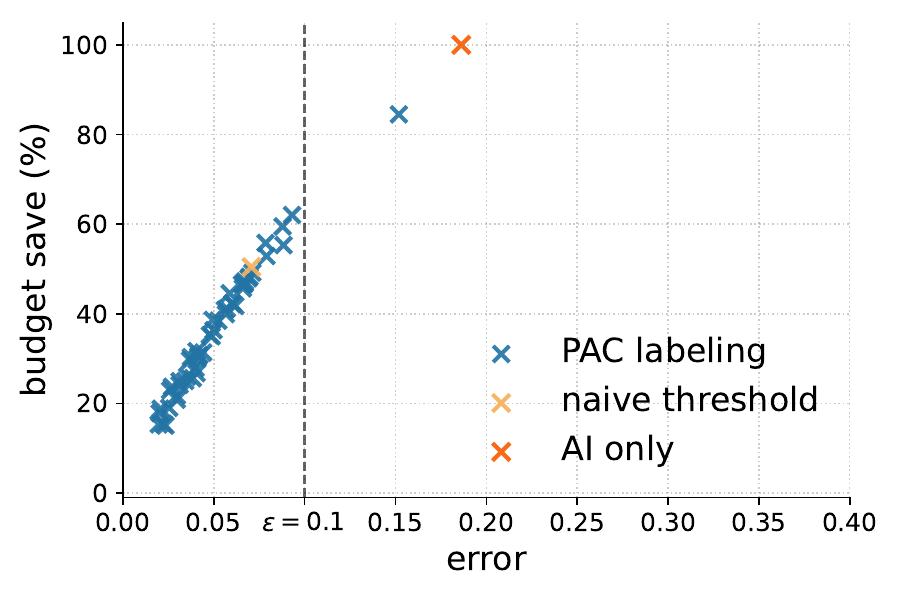}
\includegraphics[width = 0.29\textwidth]{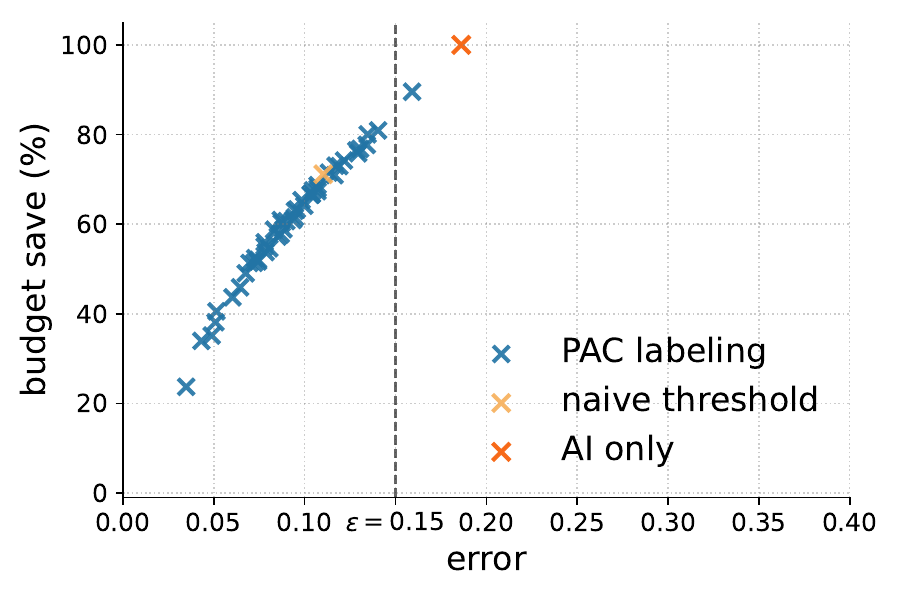}
\includegraphics[width = 0.8\textwidth]{plots/imagenet_title.pdf}
\includegraphics[width = 0.29\textwidth]{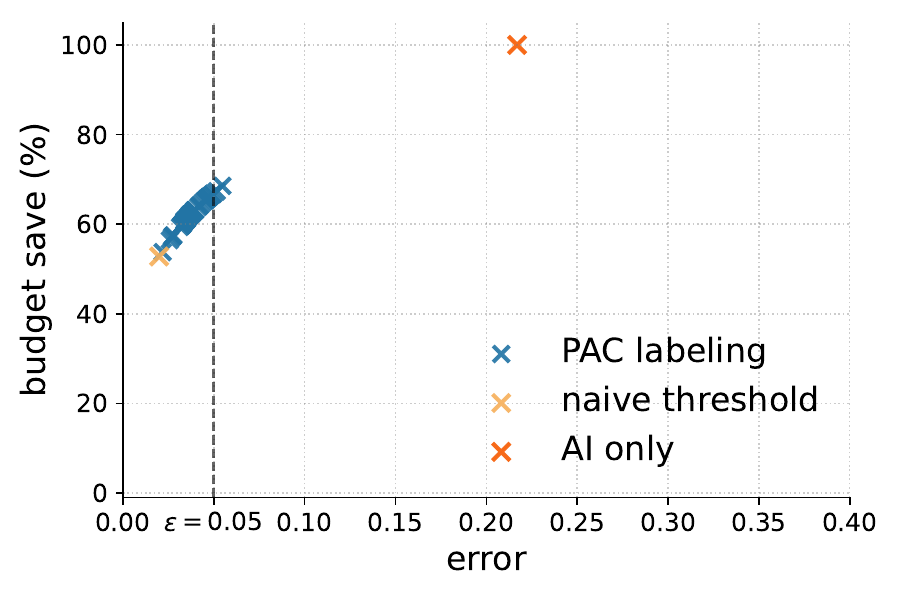}
\includegraphics[width = 0.29\textwidth]{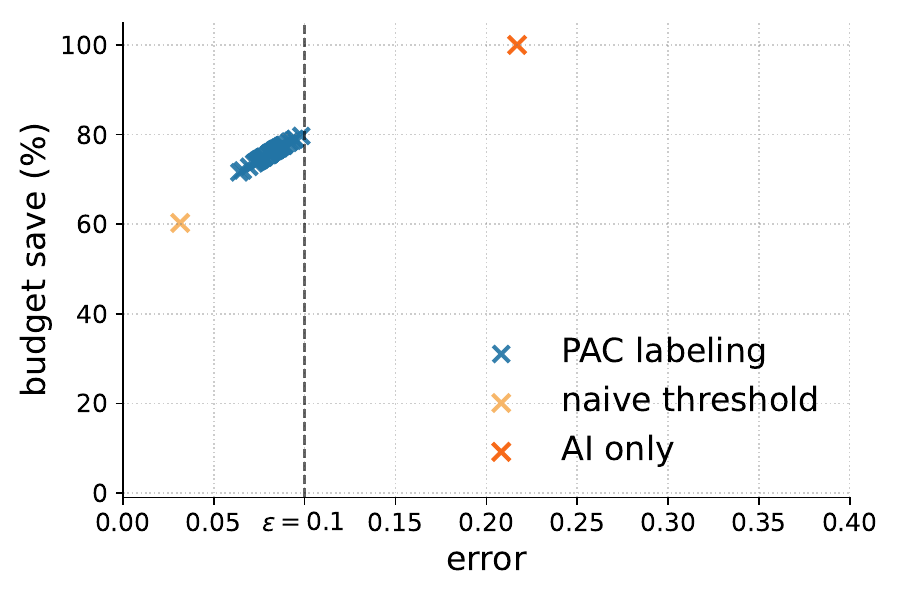}
\includegraphics[width = 0.29\textwidth]{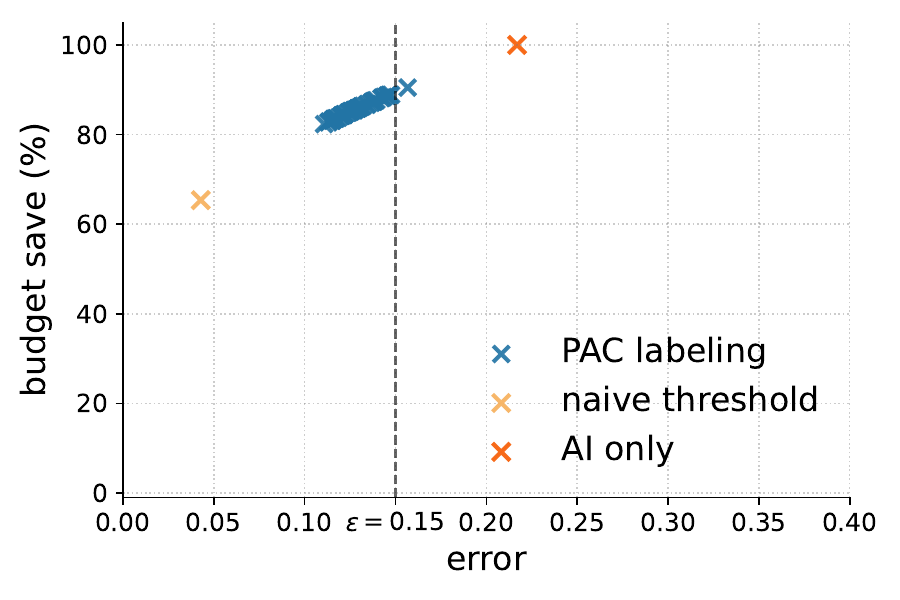}
\includegraphics[width = 0.8\textwidth]{plots/imagenetv2_title.pdf}
\includegraphics[width = 0.29\textwidth]{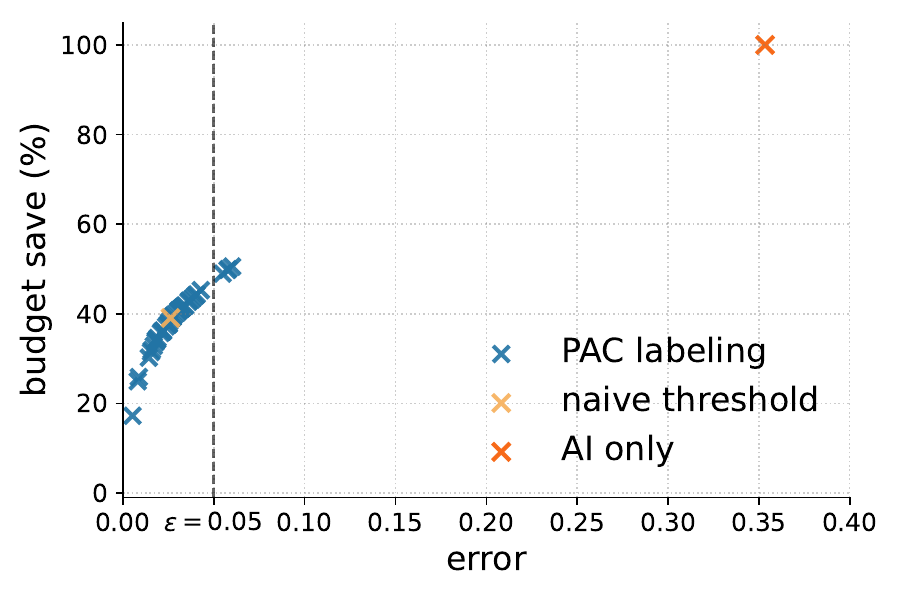}
\includegraphics[width = 0.29\textwidth]{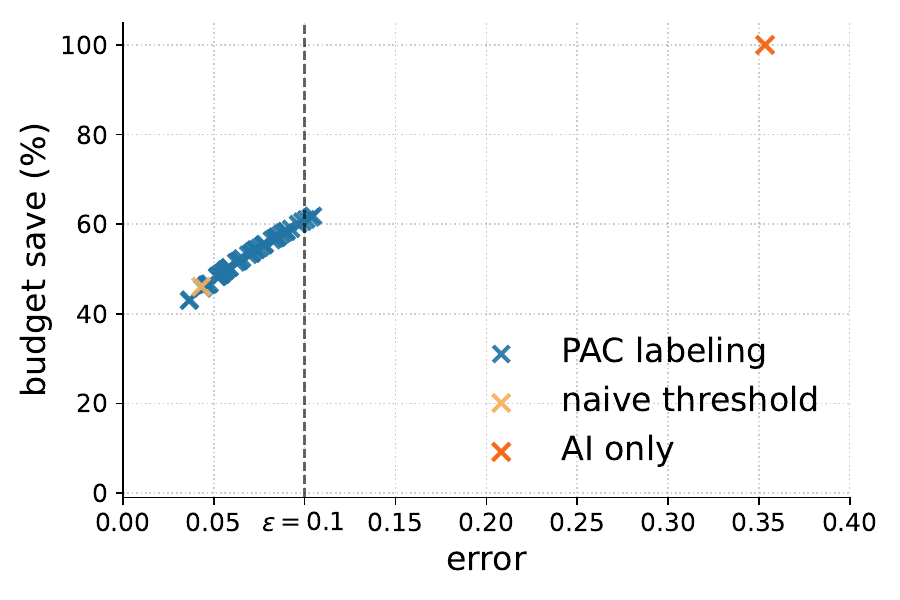}
\includegraphics[width = 0.29\textwidth]{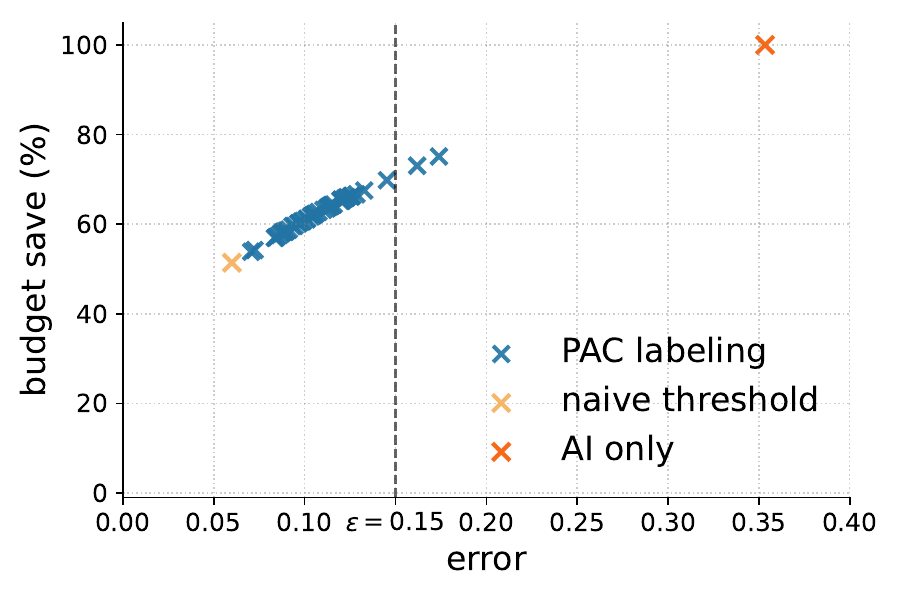}
\caption{\textbf{PAC labeling for discrete labels with asymptotic confidence intervals.} Realized error and save in budget for PAC labeling, the naive thresholding baseline, and the AI only baseline. Each row and column correspond to a different dataset and value of $\epsilon$ (denoted by vertical dashed line), respectively. For PAC labeling, we plot the realized error and save in budget for $50$ randomly chosen trials. For the naive thresholding baseline, we collect expert labels for all points with $U_i \geq \epsilon$.}
\label{fig:classification_asymp}
\end{figure}

\begin{table*}[h]
\small
\centering
\begin{adjustbox}{center}
\begin{tabular}{l|l|rr}
\toprule
\multirow{2}{*}{\textbf{Dataset}}                & \multirow{2}{*}{\textbf{Metric}}                   & \multicolumn{2}{c}{\textbf{Method}}   \\
&  & PAC labeling (asymptotic) & PAC labeling (nonasymptotic)  \\
\midrule
\multirow{2}{*}{\textbf{\shortstack[l]{Media bias }}} 
& {Budget save (\%)} & (16.11 $\pm$ 6.96)\% & (13.79 $\pm$ 3.38)\%  \\
& {Error} & 5.17\%  & 4.10\%    \\

\midrule
                               
\multirow{2}{*}{\textbf{\shortstack[l]{Stance on \\ global warming}}} 
& {Budget save (\%)} & (32.15 $\pm$ 7.38)\% & (28.09 $\pm$ 3.28)\% \\
& {Error} & 5.92\% & 4.57\%    \\

 \midrule
 \multirow{2}{*}{\textbf{{\shortstack[l]{Misinformation}}}} 
& {Budget save (\%)} & (21.41 $\pm$ 10.95)\%  & (18.12 $\pm$ 4.93)\%  \\
& {Error} & 5.83\% & 3.80\%  \\

\bottomrule
\end{tabular}
\end{adjustbox}
\caption{\textbf{PAC labeling text datasets with GPT-4o, with asymptotic (left) and nonasymptotic (right) confidence intervals.} We set $\epsilon=0.05$. PAC labeling with asymptotic guarantees enables larger saves, but may lead to slightly inflated error rates.}
\label{tab:classification_text_asymp}
\end{table*}

\begin{table*}[h]
\small
\centering
\begin{adjustbox}{center}
\begin{tabular}{l|l|rrrr}
\toprule
\multirow{2}{*}{\textbf{Dataset}}                & \multirow{2}{*}{\textbf{Metric}}                   & \multicolumn{2}{c}{\textbf{Method}}   \\
&  & PAC labeling (asymptotic) & PAC labeling (nonasymptotic)   \\
\midrule

\multirow{2}{*}{\textbf{{\shortstack[l]{ImageNet}}}} 

& {Budget save (\%)} & (62.82 $\pm$ 2.57)\% &  (59.64 $\pm$ 1.49)\%  \\
& {Error} & 5.06\% & 4.73\%\\
\midrule
                               
\multirow{2}{*}{\textbf{\shortstack[l]{ImageNet v2}}} 
& {Budget save (\%)} & 
(39.20 $\pm$ 5.82)\% & (39.07 $\pm$ 2.67)\%\\
& {Error} & 5.38\% & 4.74\%   \\

\bottomrule
\end{tabular}
\end{adjustbox}
\caption{\textbf{PAC labeling image datasets with ResNet-152, with asymptotic (left) and nonasymptotic (right) confidence intervals.} We set $\epsilon=0.05$. PAC labeling with asymptotic guarantees enables larger saves, but may lead to slightly inflated error rates.}
\label{tab:classification_vision_asymp}
\end{table*}

In Figure \ref{fig:classification_asymp} we show the realized budget save against the realized error when we use asymptotic intervals. Overall we see similar trends as in Figure \ref{fig:classification}, however the weaker requirement of asymptotic validity allows for generally larger saves.

\end{document}